\documentclass{article}

 \usepackage[preprint]{neurips_2026}

\usepackage[utf8]{inputenc} 
\usepackage[T1]{fontenc}    
\usepackage{hyperref}       
\usepackage{url}            
\usepackage{booktabs}       
\usepackage{amsfonts}       
\usepackage{nicefrac}       
\usepackage{microtype}      
\usepackage{xcolor}         

\usepackage{natbib}
\usepackage{algorithm}
\usepackage{algpseudocode}
\usepackage{paralist}
\usepackage{multirow}

\usepackage{wrapfig}

\usepackage{url}
\usepackage{color}
\usepackage{makeidx}  
\usepackage{xspace}
\usepackage{epstopdf}
\usepackage{mathrsfs}
\usepackage{enumerate}
\usepackage{graphicx,epsfig}
\usepackage{bm}
\usepackage{bbm}
\usepackage{upgreek}
\usepackage{caption}
\usepackage{subcaption}

\usepackage{amsmath}
\usepackage{amssymb}
\usepackage{mathtools}
\usepackage{amsthm}
\usepackage{float}

\usepackage{cleveref}

\title{Arbitrarily Conditioned Hierarchical Flows for Spatiotemporal Events}

\author{%
  Keyan Chen\\
  Kahlert School of Computing\\
  University of Utah\\
  \texttt{u1466725@utah.edu} \\
  \And
  Qiwei Yuan \\
  Kahlert School of Computing\\
  University of Utah\\
  \texttt{joshua.yuan@utah.edu} \\
  \AND
  Zhitong Xu \\
  Kahlert School of Computing\\
  University of Utah\\
  \texttt{u1502956@utah.edu} \\
  \And
  Bin Shen \\
  Celonis AI \\
  \texttt{stanshenbin@gmail.com} \\
  \And
  Shandian Zhe \\
  Kahlert School of Computing\\
  University of Utah\\
  \texttt{zhe@cs.utah.edu} \\
}

\theoremstyle{plain}
\newtheorem{theorem}{Theorem}[section]

\newtheorem{lemma}[theorem]{Lemma}

\theoremstyle{definition}

\theoremstyle{remark}

\newcommand{\ours}{ARCH\xspace}

\begin{document}

\maketitle

\newcommand{\var}{{\rm var}}
\newcommand{\vtrans}[2]{{#1}^{(#2)}}
\newcommand{\kron}{\otimes}
\newcommand{\schur}[2]{({#1} | {#2})}
\newcommand{\schurdet}[2]{\left| ({#1} | {#2}) \right|}
\newcommand{\had}{\circ}
\newcommand{\diag}{{\rm diag}}
\newcommand{\invdiag}{\diag^{-1}}
\newcommand{\rank}{{\rm rank}}
 \newcommand{\expt}[1]{\langle #1 \rangle}
\newcommand{\nullsp}{{\rm null}}
\newcommand{\tr}{{\rm tr}}
\renewcommand{\vec}{{\rm vec}}
\newcommand{\vech}{{\rm vech}}
\renewcommand{\det}[1]{\left| #1 \right|}
\newcommand{\pdet}[1]{\left| #1 \right|_{+}}
\newcommand{\pinv}[1]{#1^{+}}
\newcommand{\erf}{{\rm erf}}
\newcommand{\hypergeom}[2]{{}_{#1}F_{#2}}
\newcommand{\mcal}[1]{\mathcal{#1}}
\newcommand{\bepsilon}{\boldsymbol{\epsilon}}
\newcommand{\brho}{\boldsymbol{\rho}}
\renewcommand{\a}{{\bf a}}
\renewcommand{\b}{{\bf b}}
\renewcommand{\c}{{\bf c}}
\renewcommand{\d}{{\rm d}}  
\newcommand{\e}{{\bf e}}
\newcommand{\f}{{\bf f}}
\newcommand{\g}{{\bf g}}
\newcommand{\h}{{\bf h}}
\newcommand{\bi}{{\bf i}}
\newcommand{\bj}{{\bf j}} 

\renewcommand{\k}{{\bf k}}
\newcommand{\m}{{\bf m}}
\newcommand{\mhat}{{\overline{m}}}
\newcommand{\tm}{{\tilde{m}}}
\newcommand{\n}{{\bf n}}
\renewcommand{\o}{{\bf o}}
\newcommand{\p}{{\bf p}}
\newcommand{\q}{{\bf q}}
\newcommand{\wy}{{\widehat{\y}}}
\newcommand{\wlam}{{\widehat{\lambda}}}
\renewcommand{\r}{{\bf r}}
\newcommand{\s}{{\bf s}}
\renewcommand{\t}{{\bf t}}
\renewcommand{\u}{{\bf u}}
\renewcommand{\v}{{\bf v}}
\newcommand{\w}{{\bf w}}
\newcommand{\x}{{\bf x}}
\newcommand{\y}{{\bf y}}
\newcommand{\z}{{\bf z}}
\newcommand{\A}{{\bf A}}
\newcommand{\B}{{\bf B}}
\newcommand{\C}{{\bf C}}
\newcommand{\D}{{\bf D}}
\newcommand{\F}{{\bf F}}
\newcommand{\G}{{\bf G}}
\newcommand{\Gcal}{{\mathcal{G}}}
\newcommand{\Dcal}{\mathcal{D}}
\newcommand{\Qcal}{{\mathcal{Q}}}
\newcommand{\Pcal}{{\mathcal{P}}}
\newcommand{\Hcal}{{\mathcal{H}}}
\renewcommand{\H}{{\bf H}}
\newcommand{\I}{{\bf I}}
\newcommand{\J}{{\bf J}}
\newcommand{\K}{{\bf K}}
\renewcommand{\L}{{\bf L}}
\newcommand{\Lcal}{{\mathcal{L}}}
\newcommand{\M}{{\bf M}}
\newcommand{\Mcal}{{\mathcal{M}}}
\newcommand{\Ocal}{{\mathcal{O}}}
\newcommand{\Fcal}{{\mathcal{F}}}
\newcommand{\N}{\mathcal{N}}  
\newcommand{\bupeta}{\boldsymbol{\upeta}}
\renewcommand{\O}{{\bf O}}
\renewcommand{\P}{{\bf P}}
\newcommand{\Q}{{\bf Q}}
\renewcommand{\S}{{\bf S}}
\newcommand{\Scal}{{\mathcal{S}}}
\newcommand{\T}{{\bf T}}
\newcommand{\Tcal}{{\mathcal{T}}}
\newcommand{\U}{{\bf U}}
\newcommand{\Ucal}{{\mathcal{U}}}
\newcommand{\tUcal}{{\tilde{\Ucal}}}
\newcommand{\V}{{\bf V}}
\newcommand{\W}{{\bf W}}
\newcommand{\Wcal}{{\mathcal{W}}}
\newcommand{\Vcal}{{\mathcal{V}}}
\newcommand{\X}{{\bf X}}
\newcommand{\Xcal}{{\mathcal{X}}}
\newcommand{\Y}{{\bf Y}}
\newcommand{\Ycal}{{\mathcal{Y}}}
\newcommand{\Z}{{\bf Z}}
\newcommand{\Zcal}{{\mathcal{Z}}}

\newcommand{\bfLambda}{\boldsymbol{\Lambda}}

\newcommand{\bsigma}{\boldsymbol{\sigma}}
\newcommand{\balpha}{\boldsymbol{\alpha}}
\newcommand{\bpsi}{\boldsymbol{\psi}}
\newcommand{\bphi}{\boldsymbol{\phi}}
\newcommand{\bPhi}{\boldsymbol{\Phi}}
\newcommand{\bbeta}{\boldsymbol{\beta}}
\newcommand{\Beta}{\boldsymbol{\eta}}
\newcommand{\btau}{\boldsymbol{\tau}}
\newcommand{\bvarphi}{\boldsymbol{\varphi}}
\newcommand{\bzeta}{\boldsymbol{\zeta}}

\newcommand{\blambda}{\boldsymbol{\lambda}}
\newcommand{\bLambda}{\mathbf{\Lambda}}

\newcommand{\btheta}{\boldsymbol{\theta}}
\newcommand{\bpi}{\boldsymbol{\pi}}
\newcommand{\bxi}{\boldsymbol{\xi}}
\newcommand{\bSigma}{\boldsymbol{\Sigma}}
\newcommand{\bPi}{\boldsymbol{\Pi}}
\newcommand{\bOmega}{\boldsymbol{\Omega}}

\newcommand{\bx}{{\bf x}}
\newcommand{\bgamma}{\boldsymbol{\gamma}}
\newcommand{\bGamma}{\boldsymbol{\Gamma}}
\newcommand{\bUpsilon}{\boldsymbol{\Upsilon}}

\newcommand{\bmu}{\boldsymbol{\mu}}
\newcommand{\0}{{\bf 0}}

\newcommand{\bs}{\backslash}
\newcommand{\ben}{\begin{enumerate}}
\newcommand{\een}{\end{enumerate}}

 \newcommand{\notS}{{\backslash S}}
 \newcommand{\nots}{{\backslash s}}
 \newcommand{\noti}{{\backslash i}}
 \newcommand{\notj}{{\backslash j}}
 \newcommand{\nott}{\backslash t}
 \newcommand{\notone}{{\backslash 1}}
 \newcommand{\nottp}{\backslash t+1}

\newcommand{\notk}{{^{\backslash k}}}
\newcommand{\notij}{{^{\backslash i,j}}}
\newcommand{\notg}{{^{\backslash g}}}
\newcommand{\wnoti}{{_{\w}^{\backslash i}}}
\newcommand{\wnotg}{{_{\w}^{\backslash g}}}
\newcommand{\vnotij}{{_{\v}^{\backslash i,j}}}
\newcommand{\vnotg}{{_{\v}^{\backslash g}}}
\newcommand{\half}{\frac{1}{2}}
\newcommand{\msgb}{m_{t \leftarrow t+1}}
\newcommand{\msgf}{m_{t \rightarrow t+1}}
\newcommand{\msgfp}{m_{t-1 \rightarrow t}}

\newcommand{\proj}[1]{{\rm proj}\negmedspace\left[#1\right]}

\newcommand{\dif}{\mathrm{d}}
\newcommand{\abs}[1]{\lvert#1\rvert}
\newcommand{\norm}[1]{\lVert#1\rVert}

\newcommand{\mrm}[1]{\mathrm{{#1}}}
\newcommand{\RomanCap}[1]{\MakeUppercase{\romannumeral #1}}
\newcommand{\EE}{\mathbb{E}}
\newcommand{\bbI}{\mathbb{I}}
\newcommand{\bbH}{\mathbb{H}}
\newcommand{\ie}{{\textit{i.e.,}}\xspace}
\newcommand{\eg}{{\textit{e.g.,}}\xspace}
\newcommand{\etc}{{\textit{etc.}}\xspace}
\newcommand{\cmt}[1]{}	
\begin{abstract}
 Events in spatiotemporal systems are ubiquitous, yet modeling their complex distributions remains challenging. Existing point process models often rely on strong structural assumptions and are typically limited to autoregressive, event-by-event prediction. As a result, they struggle to support broader inference tasks such as inverse inference, trajectory reconstruction, and recovery of missing event locations. We introduce Arbitrarily Conditioned Hierarchical Flows (\ours), a hierarchical flow matching framework for spatiotemporal event modeling. \ours is expressive enough to capture complex event distributions while enabling tractable and accurate computation of conditional intensities, which quantify instantaneous event risk. 
 Built on a history-encoder-generative-decoder architecture, \ours introduces a hybrid masking strategy for flexible conditioning on arbitrary observed events. This enables a unified treatment of forecasting, inverse inference, and partial trajectory recovery within a single framework. Experiments on synthetic and real-world datasets show that \ours consistently outperforms existing baselines across both prediction and conditional inference tasks.
\end{abstract}
\section{Introduction}
Spatiotemporal events arise in many domains, including weather phenomena, natural disasters, epidemics, and crime. Their distributions are often highly complex, while observations are typically sparse, making learning and inference over such events particularly challenging.

Existing spatiotemporal point process models often rely on strong structural assumptions, such as the independent-increment property of Poisson processes, additive interaction structures in Hawkes-process-like models~\citep{yuan2019multivariate,pmlr-v168-zhou22a,NEURIPS2023_9d30c2de}, and parametric distribution families~\citep{shchur2020intensity}. More recent approaches have explored diffusion-based event generation~\citep{yuan2023DSTPP}, which substantially improves expressive power for capturing complex distributions. However, this flexibility often comes at the cost of losing tractable or well-calibrated conditional intensities, a central quantity for quantifying instantaneous event risk. Furthermore, existing methods are typically limited to autoregressive, event-by-event inference, making them ill-suited for broader tasks such as inverse inference, event trajectory reconstruction, and recovery of missing event locations.

To overcome these bottlenecks, we propose \ours, a hierarchical flow matching framework for spatiotemporal event modeling. Our method is flexible enough to capture complex event distributions from data while also enabling efficient and accurate computation of conditional intensities. In addition, it supports arbitrarily conditioned inference, allowing a broad range of tasks, including forecasting, inverse inference, missing-event imputation, and partial trajectory recovery. The major contributions of our work are summarized as follows:
\begin{compactitem}

    \item \textbf{Model.} We propose a hierarchical flow matching framework that first generates the event time and then, conditioned on the event time, generates the event spatial location. This decomposed generation procedure not only retains the flexibility needed to capture complex distributions, but also enables tractable and accurate computation of the conditional intensity, which quantifies instantaneous event risk. Specifically, we leverage the Picard--Lindel\"of theorem and the order-preserving property of one-dimensional ODEs to compute the conditional intensity of the event-time flow, and then solve the companion log-probability ODE for the event-location flow, combining the two to obtain the final conditional event intensity.
    
    \item \textbf{Hybrid Masked Training.} 
    To support a variety of inference tasks, we employ a history encoder and a generative decoder for each flow. We further introduce a hybrid masking strategy that enables parallel autoregressive generation, randomly conditioned inference, batched trajectory recovery, and future trajectory forecasting. During training, masks are continuously sampled, and the model is trained to recover the target velocity field for masked events conditioned on partially observed events. In this way, a broad range of inference tasks is covered, with training and inference unified within a single framework.
    
    \item \textbf{Experiments.} On three real-world benchmark datasets, \ours achieves strong performance across a broad range of tasks, including next-event prediction, initial-event inference, missing-event imputation, and partial-trajectory recovery. \ours also supports joint multi-step forecasting, achieving performance comparable to or better than its autoregressive variant and diffusion-based autoregressive baselines. On synthetic datasets generated from different spatiotemporal point processes, \ours more effectively recovers the conditional intensity and achieves lower relative $L_2$ errors than competing structured spatiotemporal point process models.
    
\end{compactitem}

\section{Preliminaries}

Flow matching~\citep{lipman2022flow} learns a continuous-time vector field that transports a simple source distribution to a target data distribution through an ordinary differential equation (ODE).
Let $\mathbf{x}_0 \sim p_0$ denote a random sample from a tractable source distribution, typically a standard Gaussian, and let $\mathbf{x}_1 \sim p_{\mathrm{data}}$ denote a data sample. The goal is to construct a time-dependent probability path $\{p_t\}_{t \in [0,1]}$ that smoothly connects $p_0$ and $p_{\mathrm{data}}$, and to learn a velocity field $\mathbf{v}_\theta(\mathbf{x}, t)$ such that samples evolve according to the ODE:
\begin{align}
\frac{d\mathbf{x}(t)}{dt} = \mathbf{v}_\theta(\mathbf{x}(t), t), \qquad t \in [0,1]. \label{eq:flow}
\end{align}
If the learned velocity field matches the true velocity field associated with the probability path, then integrating the ODE from an initial noise sample $\mathbf{x}(0)=\mathbf{x}_0$ transports $\mathbf{x}_0$ to a sample $\mathbf{x}(1)$  distributed according to $p_{\mathrm{data}}$. 
A common choice in flow matching is to define a conditional probability path between paired samples $(\mathbf{x}_0, \mathbf{x}_1)$. One simple and widely used example is the linear flow, which interpolates between source and target as $\mathbf{x}_t = (1-t)\mathbf{x}_0 + t\mathbf{x}_1$ where $t \in [0, 1]$. 
Under this construction, the corresponding conditional velocity is constant along the path: $\mathbf{u}_t(\mathbf{x}_1 \mid \mathbf{x}_0) = \frac{d\mathbf{x}_t}{dt} = \mathbf{x}_1 - \mathbf{x}_0$. 
Flow matching trains $\mathbf{v}_\theta$ to match this target velocity field along sampled intermediate points. In practice, one samples a data point $\mathbf{x}_1 \sim p_{\mathrm{data}}$, a noise sample $\mathbf{x}_0 \sim p_0$, and a time $t \sim \mathrm{Uniform}(0,1)$, constructs the interpolated point $\mathbf{x}_t$, and minimizes the regression loss 
\begin{align}
\mathcal{L}_{\mathrm{FM}}(\theta)=
\mathbb{E}_{t,\mathbf{x}_0,\mathbf{x}_1}
\left[
\left\|
\mathbf{v}_\theta(\mathbf{x}_t, t) - (\mathbf{x}_1 - \mathbf{x}_0)
\right\|^2
\right]. \label{eq:fm-loss}
\end{align}
More generally, other probability paths can be used, with corresponding target velocities derived from the chosen transport construction.

After training, generation is performed by first drawing the initial state  $\mathbf{x}(0) \sim p_0$ from the source distribution, and then solving the learned ODE~\eqref{eq:flow} forward in time. 
The final state $\mathbf{x}(1)$ is taken as a generated sample. In practice, the ODE is solved numerically using a standard solver, such as Euler or higher-order Runge-Kutta methods. Thus, once the velocity field has been learned, sampling reduces to deterministic transport from noise to data through the learned continuous-time dynamics.

\section{Methodology}


Consider a sequence of spatiotemporal events $\Gamma = \left[(s_1, \x_1), \ldots, (s_N, \x_N)\right]$ 
, where $s_1 < \cdots < s_N$, each $s_n \in \mathbb{R}_+$ denotes the timestamp of event $n$, and $\x_n \in \mathbb{R}^d$ denotes its spatial location. In typical applications, $d \leq 3$. Given a collection of observed event sequences $\Dcal$, our goal is to learn the underlying distribution governing the generation of such sequences, while supporting a broad range of inference tasks, including forecasting future events, inverse inference of initial events, imputing missing event times or locations, and reconstructing partial event trajectories.



\subsection{Hierarchical Flow-Matching Framework}

We propose a hierarchical flow-matching framework that flexibly captures complex event distributions while also enabling tractable and accurate computation of conditional intensities, which quantify instantaneous event risk and play a central role in event modeling tasks such as risk monitoring, survival analysis, hotspot discovery, and intervention planning~\cite{cox1972regression,daley2003introduction,braga2001effects}. The overall architecture is illustrated in Appendix Figure~\ref{fig:arch-architecture}.

To illustrate the framework, consider generating a future event $\e = (s, \x)$ given the history sequence $\Hcal = [(s_1, \x_1), \ldots, (s_M, \x_M)].
$
Instead of constructing a single flow that simultaneously generates the event time $s \in \mathbb{R}_+$ and spatial location $\x = (x_1, \ldots, x_d) \in \mathbb{R}^d$, we introduce two flows, $\Gcal_1$ and $\Gcal_2$: $\Gcal_1$ generates $s$ conditioned on $\Hcal$, while $\Gcal_2$ generates $\x$ conditioned on the generated time $s$ and $\Hcal$. This yields the decomposition
\begin{align}
    p(s, \x \mid \Hcal) = p(s \mid \Hcal)\, p(\x \mid s, \Hcal), \label{eq:flow-density-decomp}
\end{align}
where $\Gcal_1$ models $p(s \mid \Hcal)$ and $\Gcal_2$ models $p(\x \mid s, \Hcal)$. Note that $\Gcal_1$ and $\Gcal_2$ are not restricted to autoregressive next-event generation; as explained in Section~\ref{sect:train-infer}, they support arbitrarily conditioned generation more generally.

A key advantage of this design is that it enables efficient and accurate computation of the conditional intensity, i.e., the instantaneous event risk\footnote{More precisely, the conditional intensity, also called the hazard, characterizes the instantaneous likelihood of an event occurring at time $s$, conditioned on the event history and on no new event having occurred prior to $s$.}:
\begin{align}
    \lambda(s, \x \mid \Hcal) = \frac{p(s, \x \mid \Hcal)}{1 - F_S(s \mid \Hcal)}, \label{eq:lam}
\end{align}
where $F_S(\cdot \mid \Hcal)$ denotes the marginal cumulative distribution function (CDF) of the event time:
\begin{align}
    F_S(s \mid \Hcal)
    = p(S \le s \mid \Hcal)
    = \int_{\Omega \cap (s_M, s] \times (-\infty, \infty)^d}
    p(\tau, \x \mid \Hcal)\, d\tau\, \d x_1 \cdots \d x_d,
    \label{eq:cdf-margin}
\end{align}
and $\Omega$ is the spatiotemporal domain of the events.

If one instead uses a single flow to generate $s$ and $\x$ simultaneously, then computing the marginal CDF $F_S(s \mid \Hcal)$ in~\eqref{eq:lam} from the joint density $p(s, \x \mid \Hcal)$ requires integrating over a $(d+1)$-dimensional spatiotemporal domain, as shown in~\eqref{eq:cdf-margin}. This integral typically does not admit a closed form, while numerical integration or Monte Carlo approximation can be computationally expensive and may introduce substantial approximation error. In contrast, our hierarchical flows induce the density decomposition in~\eqref{eq:flow-density-decomp}, which in turn yields: 
\begin{align}
    \lambda(s, \x \mid \Hcal)
    = \lambda(s \mid \Hcal)\, p(\x \mid s, \Hcal),
    \qquad
    \lambda(s \mid \Hcal)
    = \frac{p(s \mid \Hcal)}{1 - F_S(s \mid \Hcal)}.
\end{align}
Thus, the full conditional intensity factors into the marginal temporal intensity $\lambda(s \mid \Hcal)$ and the conditional spatial density $p(\x \mid s, \Hcal)$, which can be computed separately from $\Gcal_1$ and $\Gcal_2$.

We first consider $p(\x \mid s, \Hcal)$. Let the ODE associated with $\Gcal_2$ be
\begin{align}
    \frac{d\x(t)}{dt}
    =
    \v_{\Theta_2}\bigl(\x(t), t \mid \Hcal, s\bigr),
    \label{eq:flow-g2}
\end{align}
where $\v_{\Theta_2}(\cdot)$ denotes the vector field and $\Theta_2$ its parameters. By the instantaneous change-of-variable formula, the log-density along the trajectory $\x(t)$ evolves according to
\begin{align}
    \frac{d}{dt}\log p(\x(t) \mid s, \Hcal)
    =
    - \nabla \cdot \v_{\Theta_2}\bigl(\x(t), t \mid \Hcal, s\bigr),
    \label{eq:logp-g2}
\end{align}
where $\nabla \cdot \v$ denotes the divergence of the velocity field. To compute the conditional density at a given value $\x_{\mathrm{val}}$, we augment the flow~\eqref{eq:flow-g2} with~\eqref{eq:logp-g2} and define a joint state $\z(t) = [\x(t), \ell(t)]$, where $\ell(t)$ accumulates the change in log-density. We then set $\x(1) = \x_{\mathrm{val}}$ and $\ell(1)=0$, and solve the coupled ODEs backward from $t=1$ to $t=0$. This yields the corresponding noise sample $\x(0)$, which follows the prior distribution $p_{\mathrm{prior}}$, together with the accumulated change $\ell(0)=\int_1^0 \frac{d}{dt}\log p(\x(t)\mid s,\Hcal)\,dt$.
The final log-density is therefore $\log p(\x \mid s, \Hcal)=\log p_{\mathrm{prior}}\bigl(\x(0)\bigr) - \ell(0)$.

Next, we consider the marginal temporal intensity $\lambda(s \mid \Hcal)$. Given a specific value $s_{\mathrm{val}}$, we follow a similar procedure 
to obtain the corresponding noise value $s(0)$ and the log-density $\log p(s_{\mathrm{val}} \mid \Hcal)$. The remaining quantity is the marginal CDF $F_S(s_{\mathrm{val}} \mid \Hcal)$, which can be obtained from the CDF of the noise variable at $s(0)$ according to the following lemma.

\begin{lemma}\label{lem}
Let \(S(0)\) be a real-valued random variable, and let \(S(t)\) be the solution at time \(t \ge 0\) to the one-dimensional ordinary differential equation $\frac{ds}{dt} = v(s(t), t)$,
with random initial condition \(S(0)\). Let \(F_t(\cdot)\) denote the cumulative distribution function of \(S(t)\). Then, for any deterministic solution trajectory \(\{s(t)\}_{t \ge 0}\) of the ODE with initial value \(s(0)\), the CDF evaluated along the trajectory is invariant in time:
\[
F_t(s(t)) = F_0(s(0)), \qquad \forall t \ge 0.
\]
\end{lemma}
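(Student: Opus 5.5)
The plan is to show that the flow map $\phi_t$, which sends an initial value $s(0)$ to the solution value $s(t)$, is a strictly increasing bijection of $\mathbb{R}$. The CDF identity then follows from a one-line event equivalence. Throughout we assume the standard regularity conditions implicit in the flow-matching setting: $v$ is continuous in $t$ and Lipschitz in $s$, locally uniformly, and solutions exist globally on $[0,\infty)$. By the Picard--Lindel\"of theorem, each initial value $s(0)$ then has a unique solution, so $\phi_t$ is well defined, and $S(t)=\phi_t(S(0))$ is a random variable.

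\textbf{Monotonicity (order preservation).} Take two initial values $a<b$ and suppose, for contradiction, that $\phi_{t}(a)\ge \phi_{t}(b)$ for some $t>0$. The function $g(\tau)=\phi_\tau(b)-\phi_\tau(a)$ is continuous in $\tau$ and satisfies $g(0)>0$. By the intermediate value theorem it has a zero at some $\tau^*\in(0,t]$. At that time the two trajectories pass through the same point $\phi_{\tau^*}(a)=\phi_{\tau^*}(b)$. Uniqueness for the initial value problem posed at time $\tau^*$ (Picard--Lindel\"of again, now solved backward) forces the two trajectories to coincide on $[0,\tau^*]$. That gives $a=b$, a contradiction. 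Hence $\phi_t$ is strictly increasing for every $t\ge0$.

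\textbf{Conclusion.} Fix a deterministic trajectory with initial value $s(0)$. Strict monotonicity of $\phi_t$ gives the event equivalence $\{S(t)\le s(t)\}=\{\phi_t(S(0))\le \phi_t(s(0))\}=\{S(0)\le s(0)\}$. Taking probabilities yields $F_t(s(t))=F_0(s(0))$. Only injectivity and monotonicity are needed for this step, not surjectivity.

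The main obstacle is the monotonicity step, specifically the appeal to backward uniqueness. This is where the Lipschitz hypothesis is essential: without it, trajectories of non-Lipschitz one-dimensional ODEs such as $\dot s=\sqrt{|s|}$ can merge, and strict order preservation fails. I would therefore state the regularity assumption explicitly, and note that it holds for the neural velocity fields used in $\Gcal_1$ (smooth activations, bounded time interval $[0,1]$).
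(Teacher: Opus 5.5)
Your proposal is correct and follows the paper's proof essentially step for step. Both arguments define the flow map, use Picard--Lindel\"of uniqueness to get strict order preservation in one dimension (trajectories cannot cross), and convert the event $\{S(t)\le s(t)\}$ into $\{S(0)\le s(0)\}$. Your version is somewhat more careful than the paper's: you make the crossing argument explicit via the intermediate value theorem and backward uniqueness, and you state the global-existence assumption that the paper leaves implicit.
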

This can be established  using the Picard--Lindel\"of theorem  together with the order-preserving property of one-dimensional ODEs; we defer the proof to Appendix~\ref{sect:appendix:proof}. Since the noise distribution is chosen to be simple, \eg a standard Gaussian, its CDF can be evaluated efficiently. In practice, to ensure accurate ODE solutions, we can use higher-order Runge-Kutta methods, such as DOPRI5~\citep{dormand1980family}.

\subsection{Arbitrarily Conditioned Training and Inference}\label{sect:train-infer}

To support a broad range of inference tasks beyond standard autoregressive, event-by-event prediction, we introduce a hybrid masking strategy for arbitrarily conditioned training and inference.
Specifically, each flow employs a history encoder and a generative decoder, both implemented as Transformers. During training, given an event sequence $\Gamma = [(s_1, \x_1), \ldots, (s_N, \x_N)]$, the encoder treats each event $e_n = (s_n, \x_n) \in \Gamma$ as an input token and produces event representations through self-attention layers. The encoder serves as the conditioning branch, providing integrated feature representations of the observed events. These representations are then incorporated into the decoder through cross-attention to guide generation of the target events. The decoder also contains $N$ tokens, each corresponding to one generated event. The input to decoder token $n$ consists of the flow time $t$ and the intermediate ODE state at time $t$. For the flow $\Gcal_2$ that models spatial locations, \ie $p(\x \mid s, \Hcal)$, the decoder input at token $n$ additionally includes the time $s_n$ of event $n$.

To flexibly support diverse inference tasks, we combine several masking strategies during training:
    \noindent\textbf{Autoregressive Mask.} This mask supports standard autoregressive generation. We impose a causal self-attention mask on the encoder so that each event token $i$ attends only to itself and past events $j < i$. On the decoder side, token $n$ attends to encoder tokens $1, \ldots, n-1$. This cross-attention pattern allows decoder token $n$ to generate event $n$ conditioned on its entire preceding history. Combined with causal self-attention in the encoder, this design enables parallel autoregressive training over all events in the sequence.

    \noindent\textbf{Random Condition Mask.} This mask supports generation of an arbitrary subset of events conditioned on the remaining observed events. Specifically, we sample a binary condition mask $\c = (c_1, \ldots, c_N)$, where each $c_n \in \{0,1\}$, $c_n = 1$ indicates that event $n$ is conditioned (observed), and $c_n = 0$ indicates that event $n$ is to be generated. On the encoder side, we mask out all tokens with $c_n = 0$ and perform full self-attention among the conditioned events to exchange and fuse their representations. On the decoder side, we mask out all tokens with $c_n = 1$ and perform self-attention among the events to be generated. We then set the cross-attention mask $\Mcal[n,m] = 0$ for all pairs $(n,m)$ such that $c_n = 0$ and $c_m = 1$, allowing every generated token to attend to every conditioned token. In this way, the full conditioned subset facilitates joint generation of the target events.

    \noindent\textbf{Consecutive Mask.} This mask supports generation of a consecutive subsequence of events conditioned on the remaining events, which is useful for trajectory reconstruction and joint multi-step forecasting. Although the random condition mask can in principle cover such cases, its independently sampled entries more often produce scattered missing events rather than contiguous missing segments. To address this, we randomly sample two endpoints $a,b$ with $1 \le a < b \le N$, set $c_n = 0$ for $a \le n \le b$, and set $c_n = 1$ otherwise. This yields a mask in which a consecutive block of events is generated conditioned on the events outside the block.


During training, we sample a mini-batch of event sequences and apply each masking strategy to compute the corresponding flow-matching loss.
The temporal flow is trained to generate log-transformed inter-event times rather than absolute timestamps, which removes dependence on the arbitrary time origin, compresses large temporal gaps, and improves numerical stability.
The objective is to recover the target velocity field on the tokens corresponding to events that need to be generated. For the random-conditioning and consecutive masks, the mask pattern is sampled independently for each sequence in the batch. We then sum the resulting losses across all mask types and backpropagate through the full objective to update the model parameters.  The training/generation procedures are summarized in Appendix Algorithms~\ref{alg:arch-training}--\ref{alg:arch-generation}. The masking strategies are illustrated in Figure~\ref{fig:mask-strategy}.

\section{Related Work}\label{sec:related}

\textbf{Temporal Event Modeling.}
A large body of work has developed temporal point process models for temporal event data. Early approaches include Poisson processes and Hawkes processes (HPs)~\citep{hawkes1971spectra,blundell2012modelling,lloyd2015variational,wang2017predicting,zhe2018stochastic,wang2022nonparametric}. Later works moved toward learning conditional intensity functions with deep neural networks. Representative examples include Neural Hawkes Processes (NHP)~\citep{mei2017neural}, which encode event histories using LSTMs~\citep{hochreiter1997long}, Recurrent Marked Temporal Point Processes (RMTPP)~\citep{du2016recurrent}, which similarly use recurrent architectures while explicitly modeling event marks, and Transformer-based approaches~\citep{zhang2020self,zuo2020transformer}.

More recent research has explored modern generative models for temporal event generation. \citet{lin2022exploring} proposed an autoregressive diffusion approach that generates events one at a time. \citet{ludke2023add} introduced a diffusion framework that uses a homogeneous Poisson process as the noise distribution and applies thinning and deletion operations to edit sequences during generation. \citet{ludke2026editbased} further developed edit-based generation by introducing several sequence edit operations to align noise sequences with event sequences of potentially different lengths, and used a flow model to learn the corresponding edit-rate table. \citet{kerrigan2026eventflow} first generates the sequence length and then generates the event sequence accordingly. Despite their expressive power, these methods generally do not provide tractable or well-calibrated conditional intensities, making it difficult to recover the instantaneous event risk that is central to standard event modeling.

\textbf{Spatiotemporal Event Modeling.}
Classical spatiotemporal point process models~\citep{yuan2019multivariate,reinhart2018review} largely extend Poisson and Hawkes processes to the spatiotemporal setting. More recently, \citet{chen2021neuralstpp} proposed a neural spatiotemporal point process that models jumps in the conditional intensity upon event arrivals using recurrent neural networks, and employs neural ordinary differential equations (ODEs)~\citep{chen2018neural} to capture continuous-time intensity evolution between events. In contrast, \citet{jia2019neural} used neural stochastic differential equations (SDEs) to model latent state dynamics governing the conditional intensity.
Other recent works retain the additive, excitation-only structure of spatiotemporal Hawkes processes while increasing modeling flexibility. The model of \citet{pmlr-v168-zhou22a} parameterizes the triggering kernel using Transformer encodings of historical events. \citet{NEURIPS2023_9d30c2de} introduced a neural triggering kernel based on monotonic networks~\citep{sill1997monotonic}, which allows exact likelihood integration. More recently, \citet{yuan2023DSTPP} proposed an autoregressive diffusion model that sequentially generates spatiotemporal events. Although this method demonstrates strong performance on next-event prediction, it, like the temporal generative models above, does not explicitly provide calibrated conditional intensities. Overall, existing spatiotemporal event models either rely on restrictive structural assumptions or prioritize expressive generation without retaining tractable conditional intensities, and are typically focused on autoregressive next-event prediction rather than more general conditioned inference tasks.

\section{Experiments}
\subsection{Intensity Recovery}
\label{sect:intensity}

We first evaluated whether \ours can recover the conditional intensity of event sequences, i.e., the instantaneous event risk. To this end, we constructed two synthetic spatiotemporal point processes. The first is a spatiotemporal Hawkes process with conditional intensity
\begin{align}
\lambda(s,\x|\Hcal_s) = \lambda_0 + \sum\nolimits_{s_n<s} \alpha e^{-\beta(s-s_n)}
\frac{1}{2\pi \sigma^2}
e^{-\frac{\|\x - \x_n\|^2}{2\sigma^2}},
\label{eq:sthp}  
\end{align}
where $s \in [0, 60]$ denotes time, $\x \in [0,1]^2$ denotes the spatial location, and $\Hcal_s = \{(s_n,\x_n)\mid s_n < s\}$ denotes the history of events before time $s$. We set $\lambda_0 = 1.0$, $\alpha=0.72$, $\beta=1.2$, and $\sigma=0.05$. 

The second process is a spatiotemporal self-correcting process,
\begin{align}
    \lambda(s, \x|\Hcal_s) = \mu \exp\left(\alpha s \phi(\x| c_0) - \sum\nolimits_{s_n<s}\beta \phi(\x_n - \x|c_1)\right),
    \label{eq:scstpp}
\end{align}
where $\phi(\cdot | c) = \N(\cdot |\0, c\I)$. We set $\mu=1.0$, $\alpha=0.2$, $\beta = 0.4$, $c_0=0.25$, and $c_1 = 0.2$. The spatiotemporal domain was set to $(s,\x) \in [0,100]\times [0,1]^2$. We refer to the first process as \textbf{STHP-SYN} and the second as \textbf{STSC-SYN}. For each process, we simulated 10K event sequences for training, 1K for validation, and 200 for testing using Ogata's thinning algorithm~\citep{ogata1981lewis}.
\begin{wraptable}{r}{0.5\textwidth}
\centering
\small
\setlength{\tabcolsep}{6pt}
\renewcommand{\arraystretch}{1.05}
\caption{\small Relative $L_2$ error of the conditional intensity on synthetic datasets. Best results are shown in bold.}
\label{tab:synthetic-rel-l2}
\small
\begin{tabular}{lcc}
\toprule
Model & \textbf{STHP-SYN} & \textbf{STSC-SYN} \\
\midrule
AutoSTPP & $0.572 \pm 0.187$ & $0.588 \pm 0.153$ \\
DeepSTPP & $0.631 \pm 0.126$ & $0.918 \pm 0.037$ \\
NSTPP    & $0.981 \pm 0.022$ & $0.819 \pm 0.087$ \\
ARCH     & $\mathbf{0.458 \pm 0.171}$ & $\mathbf{0.370 \pm 0.132}$ \\
\bottomrule
\end{tabular}
\end{wraptable}
We compared \ours with several state-of-the-art spatiotemporal point process models that explicitly parameterize the conditional intensity function. The first baseline is Deep Spatiotemporal Hawkes Process (DeepSTPP)~\citep{pmlr-v168-zhou22a}, which adopts a Hawkes-style intensity with exponential triggering kernels, where the kernel parameters are predicted by a Transformer encoder over past events. The second is Automatic Integration for Spatiotemporal Neural Point Processes (AutoSTPP)~\citep{NEURIPS2023_9d30c2de}, which extends Hawkes-style intensities with neural triggering kernels parameterized by monotonic networks~\citep{sill1997monotonic}, enabling exact likelihood integration. The third is Neural Spatiotemporal Point Process (NSTPP)~\citep{chen2021neuralstpp}, which models the conditional intensity using neural ODEs and updates the event-history representation through GRU-based state transitions when new events arrive. Our method was implemented in PyTorch and trained using the AdamW optimizer~\citep{loshchilov2017decoupled} with a learning rate of $10^{-3}$ for up to 300 epochs. The mini-batch size was set to 64. The two flows in our model share the same Transformer backbone but use separate heads for velocity prediction. For conditional intensity evaluation, we solved the associated ODEs using DOPRI5~\citep{dormand1980family} with a fixed integration budget of 20 solver steps.
Additional training details and hyperparameter settings are provided in the Appendix Section~\ref{sect:appendix:hyper}. For DeepSTPP, AutoSTPP, and NSTPP, we used the official implementations and default hyperparameter settings released by the original authors.

For each test sequence, we evaluated the conditional intensity at every event time. At each time point, we computed an intensity snapshot on a uniform $8 \times 8$ spatial grid. We then computed the relative $L_2$ error of the predicted snapshots against the ground-truth snapshots, aggregated over all evaluated time points. As shown in Table~\ref{tab:synthetic-rel-l2}, \ours achieves substantially smaller relative $L_2$ errors than the competing approaches, demonstrating more accurate recovery of the conditional intensity.
\begin{figure*}[t]
    \centering
    \includegraphics[width=\textwidth]{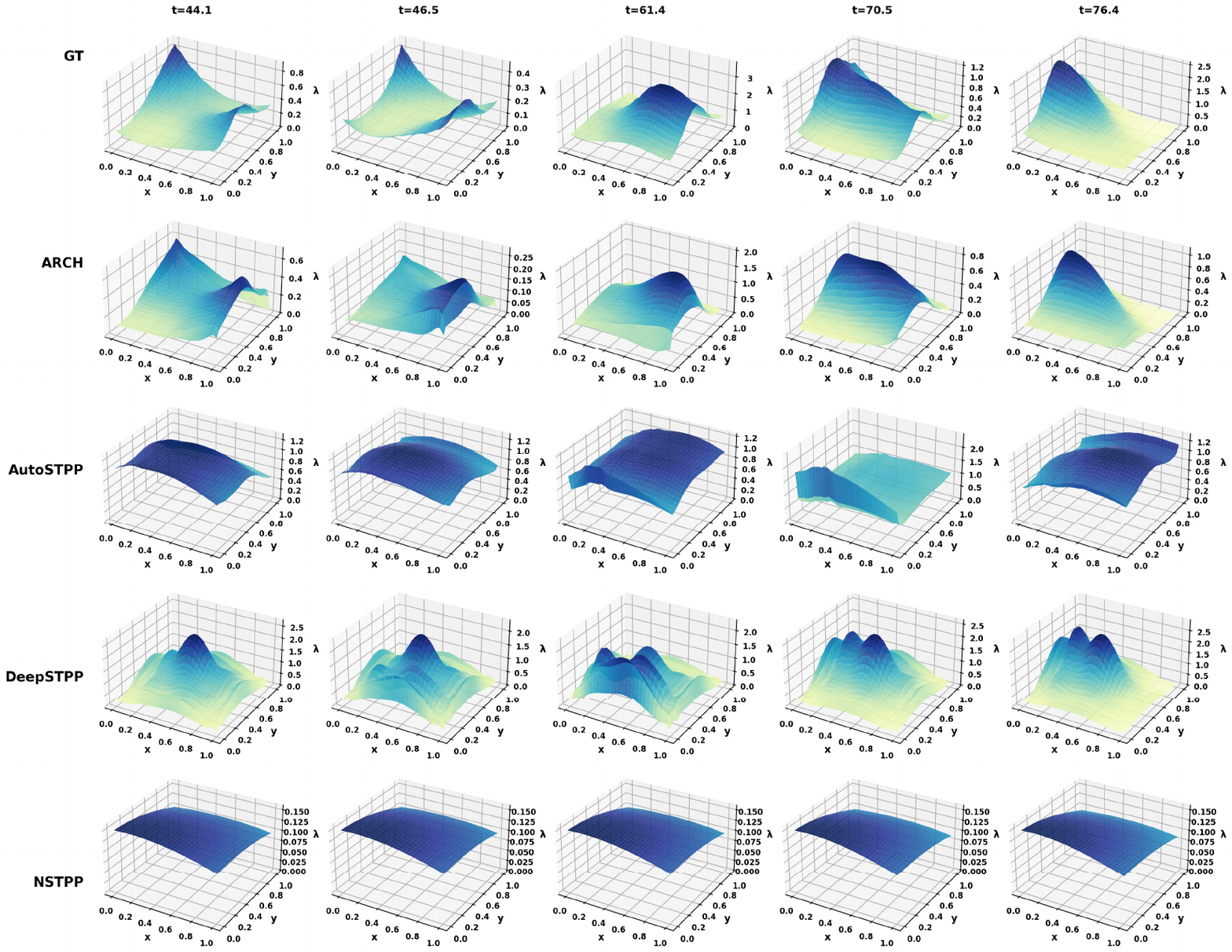}
\caption{\small Conditional intensity functions at representative time points on \textbf{STSC-SYN}. GT: ground-truth.}
    \label{fig:cif-self-correcting}
\end{figure*}

We then randomly selected one test sequence from \textbf{STHP-SYN} and \textbf{STSC-SYN}, and visualized the learned intensity functions at five representative time points. As shown in Figure~\ref{fig:cif-self-correcting} and Appendix Figure~\ref{fig:cif-HP}, \ours closely recovers the shape of the ground-truth intensity across all time points. DeepSTPP and AutoSTPP also capture the overall intensity shape on \textbf{STHP-SYN} reasonably well; see Appendix Figure~\ref{fig:cif-HP}. However, their learned intensities deviate substantially from the ground truth on \textbf{STSC-SYN}. This is likely because both methods impose an additive, excitation-only Hawkes-style intensity structure, whereas \textbf{STSC-SYN} is generated from a multiplicative intensity model with strong inhibitory interactions among events. The mismatch can therefore be attributed, at least in part, to model misspecification. Moreover, even on \textbf{STHP-SYN}, where their recovered intensity shapes are visually close to the ground-truth, DeepSTPP and AutoSTPP exhibit more pronounced scale discrepancies than \ours, which is consistent with their larger relative $L_2$ errors in Table~\ref{tab:synthetic-rel-l2}.
In contrast, \ours faithfully captures the intensity structure on both datasets, demonstrating its flexibility in capturing both excitation-dominated and inhibition-driven spatiotemporal dynamics without relying on a restrictive Hawkes-style parameterization. Finally, NSTPP consistently exhibits poor recovery performance: its learned intensity surfaces show only minor variation across time points. This suggests that NSTPP struggles to learn a time-sensitive conditional spatial distribution in these synthetic settings.

\subsection{Predictive Performance}


We next evaluated \ours on a range of prediction tasks using three real-world benchmark datasets: \textbf{Earthquake}~\citep{chen2021neuralstpp}, \textbf{COVID-19}~\citep{chen2021neuralstpp}, and \textbf{Citibike}~\citep{10.1145/3580305.3599511}. These datasets capture seismic activity, infectious disease propagation, and urban mobility, providing diverse spatiotemporal event scenarios. Details of the datasets are provided in Appendix~\ref{sect:dataset}.
\begin{table*}[th]
\centering
\small 
\caption{\small Root-mean-square error (RMSE) and Euclidean distance for next-event time and location prediction. The best two results are highlighted in bold.}\label{tbl:next-event-pred}
\small 
\begin{tabular}{c|cc|cc|cc}
\hline
& \multicolumn{2}{c|}{Earthquake} & \multicolumn{2}{c|}{COVID-19} & \multicolumn{2}{c}{Citibike} \\ \cline{2-7} 
Model            & Spatial $\downarrow$   & Temporal $\downarrow$  & Spatial $\downarrow$   & Temporal $\downarrow$  & Spatial $\downarrow$   & Temporal  $\downarrow$ \\ \hline

HPP  & 9.45{\scriptsize $\pm$0.000} & 0.412{\scriptsize $\pm$0.000} & 0.818{\scriptsize $\pm$0.000} & 0.113{\scriptsize $\pm$0.000} & 0.452{\scriptsize $\pm$0.000} & 0.239{\scriptsize $\pm$0.000} \\
STHP & 8.35{\scriptsize $\pm$0.252} & 0.424{\scriptsize $\pm$0.018} & 0.422{\scriptsize $\pm$0.000} & {0.100}{\scriptsize $\pm$0.001} & 0.032{\scriptsize $\pm$0.000} & 0.633{\scriptsize $\pm$0.126} \\
NJSDE   &    9.98{\scriptsize $\pm$0.024} &   0.465{\scriptsize $\pm$0.009} &  0.641{\scriptsize $\pm$0.009}  &  0.137{\scriptsize $\pm$0.001} &   0.707{\scriptsize $\pm$0.001}    &  0.264{\scriptsize $\pm$0.005} \\ 
NSTPP      &   8.11{\scriptsize $\pm$0.000}        &  0.547{\scriptsize $\pm$0.010}  &   0.560{\scriptsize $\pm$0.000}   &   0.145{\scriptsize $\pm$0.002}   &    0.705{\scriptsize $\pm$0.000}&  0.355{\scriptsize $\pm$0.013}  \\ 
DeepSTPP        & 9.20{\scriptsize $\pm$0.000}   &   \textbf{0.341{\scriptsize $\pm$0.000}}  &  0.687{\scriptsize $\pm$0.000} &  0.197{\scriptsize $\pm$0.000}     &  0.044{\scriptsize $\pm$0.000}    &   0.234{\scriptsize $\pm$0.000} \\
DSTPP   &   \textbf{6.77{\scriptsize $\pm$0.193}}  &  \textbf{0.375{\scriptsize $\pm$0.001}}    & \textbf{0.419{\scriptsize $\pm$0.001}} &        \textbf{0.093{\scriptsize $\pm$0.000}}  & \textbf{0.031{\scriptsize $\pm$0.000}}   &  \textbf{0.200{\scriptsize $\pm$0.002}}\\ \hline 
\ours (Ours) & \textbf{6.55{\scriptsize $\pm$0.022}} & \textbf{0.375{\scriptsize $\pm$0.001}} & \textbf{0.391{\scriptsize $\pm$0.000}} & \textbf{0.097{\scriptsize $\pm$0.000}} & \textbf{0.031{\scriptsize $\pm$0.000}} & \textbf{0.211{\scriptsize $\pm$0.000}}\\ \hline 
\end{tabular}
\vspace{-0.1in}
\end{table*}

\paragraph{Next Event Prediction.}
We first evaluated \ours on next-event prediction, the standard forecasting task in spatiotemporal event modeling. We compared against several classical and recent state-of-the-art methods: 
(1) DeepSTPP~\citep{pmlr-v168-zhou22a}; 
(2) NSTPP~\citep{chen2021neuralstpp}; 
(3) Diffusion Spatiotemporal Point Process (DSTPP)~\citep{10.1145/3580305.3599511}, a recent diffusion-based generative model that autoregressively generates the next spatiotemporal event conditioned on the history. Due to its DDPM formulation~\citep{ho2020denoising}, DSTPP cannot evaluate the conditional intensity; 
(4) Neural Jump Stochastic Differential Equations (NJSDE)~\citep{jia2019neural}, which encode event histories as latent states and model their dynamics with neural SDEs; 
(5) Spatiotemporal Hawkes Process (STHP), using the conditional intensity form in Eq.~\eqref{eq:sthp}; and 
(6) Homogeneous Poisson Process (HPP). 
We implemented STHP in PyTorch and trained it with the AdamW optimizer using a learning rate of $10^{-3}$ for up to 1K epochs to ensure convergence. HPP admits a closed-form solution and does not require explicit training. The architecture of \ours is kept fixed across all prediction tasks. For next-event prediction, \ours uses the autoregressive mask and performs Euler integration with 10 steps. For all competing neural baselines, we used the public implementations released by the original authors.

We used the same training, validation, and test splits as provided in~\citep{10.1145/3580305.3599511} for all three datasets. Following the experimental protocol of~\citep{chen2021neuralstpp,yuan2023DSTPP}, we repeated each experiment five times with different random initializations. We evaluated temporal prediction using the root-mean-square error (RMSE) of the predicted event time, and spatial prediction using the Euclidean distance between the predicted and ground-truth event locations. We report the mean and standard deviation in Table~\ref{tbl:next-event-pred}. \ours consistently achieves top performance in both temporal and spatial prediction. Its prediction accuracy is often close to that of the diffusion-based model DSTPP, while substantially outperforming other neural point process baselines in many cases. In particular, \ours achieves the best spatial prediction accuracy and the second-best temporal prediction accuracy across all three datasets. Although \ours and DSTPP obtain comparable next-event prediction performance, DSTPP cannot provide calibrated estimates of event risk through the conditional intensity and is restricted to autoregressive next-event generation. 

\begin{table*}[th]
\centering
\small
\caption{\small Inverse inference of the initial events. KNNImp is short for KNNImpute.}
\label{tb:inverse-prediction}
\small
\begin{tabular}{c|cc|cc|cc}
\hline
& \multicolumn{2}{c|}{Earthquake} & \multicolumn{2}{c|}{COVID-19} & \multicolumn{2}{c}{Citibike} \\ \cline{2-7}
Model & Spatial $\downarrow$ & Temporal $\downarrow$ & Spatial $\downarrow$ & Temporal $\downarrow$ & Spatial $\downarrow$ & Temporal $\downarrow$ \\ \hline

\multicolumn{7}{c}{Recovering the first event} \\ \hline
KNNImp ($k=3$)  & 11.8{\scriptsize $\pm$0.000} & 0.386{\scriptsize $\pm$0.000} & 0.927{\scriptsize $\pm$0.000} & 0.133{\scriptsize $\pm$0.000} & 0.052{\scriptsize $\pm$0.000} & 0.363{\scriptsize $\pm$0.000} \\
KNNImp ($k=5$)  & 11.0{\scriptsize $\pm$0.000} & 0.414{\scriptsize $\pm$0.000} & 0.800{\scriptsize $\pm$0.000} & 0.135{\scriptsize $\pm$0.000} & 0.046{\scriptsize $\pm$0.000} & 0.336{\scriptsize $\pm$0.000} \\
KNNImp ($k=10$) & 10.1{\scriptsize $\pm$0.000} & 0.406{\scriptsize $\pm$0.000} & 0.710{\scriptsize $\pm$0.000} & 0.142{\scriptsize $\pm$0.000} & 0.042{\scriptsize $\pm$0.000} & 0.299{\scriptsize $\pm$0.000} \\ 
\ours  & \textbf{7.34{\scriptsize $\pm$0.021}} & \textbf{0.304{\scriptsize $\pm$0.019}} & \textbf{0.478{\scriptsize $\pm$0.002}} & \textbf{0.087{\scriptsize $\pm$0.010}} & \textbf{0.033{\scriptsize $\pm$0.000}} & \textbf{0.070{\scriptsize $\pm$0.033}} \\ \hline

\multicolumn{7}{c}{Recovering the first and second events} \\ \hline
KNNImp ($k=3$)  & 12.2{\scriptsize $\pm$0.000} & 0.506{\scriptsize $\pm$0.000} & 0.972{\scriptsize $\pm$0.000} & 0.174{\scriptsize $\pm$0.000} & 0.052{\scriptsize $\pm$0.000} & 0.462{\scriptsize $\pm$0.000} \\
KNNImp ($k=5$)  & 11.3{\scriptsize $\pm$0.000} & 0.518{\scriptsize $\pm$0.000} & 0.830{\scriptsize $\pm$0.000} & 0.172{\scriptsize $\pm$0.000} & 0.046{\scriptsize $\pm$0.000} & 0.456{\scriptsize $\pm$0.000} \\
KNNImp ($k=10$) & 10.2{\scriptsize $\pm$0.000} & 0.513{\scriptsize $\pm$0.000} & 0.723{\scriptsize $\pm$0.000} & 0.173{\scriptsize $\pm$0.000} & 0.042{\scriptsize $\pm$0.000} & 0.456{\scriptsize $\pm$0.000} \\ 
\ours  & \textbf{7.60{\scriptsize $\pm$0.022}} & \textbf{0.471{\scriptsize $\pm$0.006}} & \textbf{0.464{\scriptsize $\pm$0.001}} & \textbf{0.148{\scriptsize $\pm$0.002}} & \textbf{0.033{\scriptsize $\pm$0.000}} & \textbf{0.373{\scriptsize $\pm$0.011}} \\ \hline
\end{tabular}
\vspace{-0.1in}
\end{table*}

\paragraph{Inverse Inference.}
Second, we evaluated \ours on inverse inference, where the goal is to infer the initial events from subsequent observations. We considered two settings: recovering the first event and recovering the first two events. Since existing spatiotemporal event models are not designed to support such inverse inference tasks, we compared against KNNImpute~\citep{troyanskaya2001missing}, a general-purpose and widely used missing-value imputation method based on nearest neighbors. We used the implementation from Scikit-Learn~\citep{scikit-learn} and varied the number of neighbors $k \in \{3, 5,10\}$.
As shown in Table~\ref{tb:inverse-prediction}, \ours consistently outperforms KNNImpute by a large margin, demonstrating substantially better recovery of the initial events. These results highlight a key advantage of our arbitrarily conditioned formulation: the same trained model can be directly applied to nonstandard inference tasks beyond autoregressive forecasting, without task-specific retraining or architectural modification.

\paragraph{Missing Events Recovery.}
Third, we evaluated \ours on recovering randomly missing events. For each test sequence, we randomly selected a subset of events as missing and applied \ours to infer them conditioned on the remaining observed events. We varied the missing ratio among $10\%$, $20\%$, and $30\%$. As shown in Appendix Table~\ref{tb:completion}, \ours consistently outperforms KNNImpute across nearly all settings. The only exception is temporal prediction on the COVID-19 dataset, where \ours is slightly worse. These results show that, although \ours is trained as a generative model, it can also serve as an effective event imputation model and achieves competitive or superior performance compared with the established KNN-based imputation baseline. We further evaluated \ours on recovering events with partially missing attributes, where the event time or location is unobserved. \ours outperforms KNNImpute in almost all cases; details are provided in Appendix~\ref{sect:missing}.

\begin{table*}[th]
\centering
\small
\caption{\small Internal event trajectory recovery.}
\label{tab:partial-recovery}
\small
\begin{tabular}{c|cc|cc|cc}
\hline
& \multicolumn{2}{c|}{Earthquake} & \multicolumn{2}{c|}{COVID-19} & \multicolumn{2}{c}{Citibike} \\ \cline{2-7}
Model & Spatial $\downarrow$ & Temporal $\downarrow$ & Spatial $\downarrow$ & Temporal $\downarrow$ & Spatial $\downarrow$ & Temporal $\downarrow$ \\ \hline

\multicolumn{7}{c}{Recovery length = 5} \\ \hline
KNNImp ($k=3$)  & 8.71{\scriptsize $\pm$0.123} & 0.558{\scriptsize $\pm$0.011} & 0.540{\scriptsize $\pm$0.007} & 0.186{\scriptsize $\pm$0.006} & 0.037{\scriptsize $\pm$0.000} & 0.193{\scriptsize $\pm$0.006} \\
KNNImp ($k=5$)  & 8.42{\scriptsize $\pm$0.125} & 0.523{\scriptsize $\pm$0.008} & 0.518{\scriptsize $\pm$0.006} & 0.179{\scriptsize $\pm$0.005} & 0.035{\scriptsize $\pm$0.000} & 0.182{\scriptsize $\pm$0.005} \\
KNNImp ($k=10$) & 8.24{\scriptsize $\pm$0.104} & 0.496{\scriptsize $\pm$0.006} & 0.495{\scriptsize $\pm$0.005} & 0.174{\scriptsize $\pm$0.006} & 0.034{\scriptsize $\pm$0.000} & \textbf{0.174{\scriptsize $\pm$0.005}} \\ 
\ours & \textbf{7.90{\scriptsize $\pm$0.127}} & \textbf{0.456{\scriptsize $\pm$0.010}} & \textbf{0.467{\scriptsize $\pm$0.007}} & \textbf{0.173{\scriptsize $\pm$0.008}} & \textbf{0.031{\scriptsize $\pm$0.000}} & 0.176{\scriptsize $\pm$0.005} \\ \hline

\multicolumn{7}{c}{Recovery length = 10} \\ \hline
KNNImp($k=3$)  & 8.91{\scriptsize $\pm$0.060} & 0.548{\scriptsize $\pm$0.008} & 0.536{\scriptsize $\pm$0.006} & 0.214{\scriptsize $\pm$0.005} & 0.037{\scriptsize $\pm$0.000} & 0.196{\scriptsize $\pm$0.002} \\
KNNImp($k=5$)  & 8.53{\scriptsize $\pm$0.058} & 0.518{\scriptsize $\pm$0.006} & 0.510{\scriptsize $\pm$0.006} & 0.204{\scriptsize $\pm$0.004} & 0.035{\scriptsize $\pm$0.000} & 0.182{\scriptsize $\pm$0.003} \\
KNNImp($k=10$) & 8.25{\scriptsize $\pm$0.045} & 0.494{\scriptsize $\pm$0.006} & 0.490{\scriptsize $\pm$0.006} & 0.198{\scriptsize $\pm$0.004} & 0.034{\scriptsize $\pm$0.000} & 0.169{\scriptsize $\pm$0.003} \\
\ours & \textbf{7.75{\scriptsize $\pm$0.060}} & \textbf{0.458{\scriptsize $\pm$0.008}} & \textbf{0.462{\scriptsize $\pm$0.006}} & \textbf{0.183{\scriptsize $\pm$0.006}} & \textbf{0.031{\scriptsize $\pm$0.000}} & \textbf{0.156{\scriptsize $\pm$0.004}} \\ \hline
\end{tabular}
\end{table*}
\begin{table*}[th]
\centering
\small
\caption{\small Forecasting future event trajectories.}
\label{tab:future-segment}
\small
\begin{tabular}{c|cc|cc|cc}
\hline
& \multicolumn{2}{c|}{Earthquake} & \multicolumn{2}{c|}{COVID-19} & \multicolumn{2}{c}{Citibike} \\ \cline{2-7}
Model & Spatial $\downarrow$ & Temporal $\downarrow$ & Spatial $\downarrow$ & Temporal $\downarrow$ & Spatial $\downarrow$ & Temporal $\downarrow$ \\ \hline

\multicolumn{7}{c}{Trajectory length = 5} \\ \hline
DSTPP      & 10.8{\scriptsize $\pm$0.067} & 0.573{\scriptsize $\pm$0.013} & 0.612{\scriptsize $\pm$0.006} & 0.070{\scriptsize $\pm$0.002} & 0.045{\scriptsize $\pm$0.000} & 1.054{\scriptsize $\pm$0.007} \\
ARCH-AR   & 8.74{\scriptsize $\pm$0.039} & 0.420{\scriptsize $\pm$0.005} & \textbf{0.446{\scriptsize $\pm$0.004}} & \textbf{0.064{\scriptsize $\pm$0.003}} & \textbf{0.032{\scriptsize $\pm$0.000}} & 0.935{\scriptsize $\pm$0.005} \\  
ARCH          & \textbf{8.73{\scriptsize $\pm$0.042}} & \textbf{0.416{\scriptsize $\pm$0.002}} & 0.448{\scriptsize $\pm$0.004} & 0.071{\scriptsize $\pm$0.004} & \textbf{0.032{\scriptsize $\pm$0.000}} & \textbf{0.910{\scriptsize $\pm$0.008}} \\ \hline

\multicolumn{7}{c}{Trajectory length = 10} \\ \hline
DSTPP     & 11.1{\scriptsize $\pm$0.085} & 0.589{\scriptsize $\pm$0.014} & 0.626{\scriptsize $\pm$0.005} & 0.083{\scriptsize $\pm$0.002} & 0.045{\scriptsize $\pm$0.000} & 0.844{\scriptsize $\pm$0.006} \\
ARCH-AR   & 8.76{\scriptsize $\pm$0.054} & \textbf{0.463{\scriptsize $\pm$0.002}} & \textbf{0.459{\scriptsize $\pm$0.004}} & \textbf{0.068{\scriptsize $\pm$0.002}} & \textbf{0.032{\scriptsize $\pm$0.000}} & 0.695{\scriptsize $\pm$0.004} \\ 
ARCH          & \textbf{8.74{\scriptsize $\pm$0.054}} & 0.465{\scriptsize $\pm$0.006} & 0.461{\scriptsize $\pm$0.003} & 0.087{\scriptsize $\pm$0.005} & \textbf{0.032{\scriptsize $\pm$0.000}} & \textbf{0.626{\scriptsize $\pm$0.008}} \\ \hline
\end{tabular}
\vspace{-0.2in}
\end{table*}

\paragraph{Event Trajectory Recovery.}
Fourth, we evaluated \ours on recovering a missing event trajectory, i.e., a contiguous subsequence within an event sequence. For each test sequence, we randomly masked out a subsequence and applied \ours to predict the masked events conditioned on the remaining observations. We considered masked subsequences of length $5$ and $10$. As shown in Table~\ref{tab:partial-recovery}, \ours consistently outperforms KNNImpute in most cases. The only exception occurs when the masked trajectory length is $5$, where the temporal prediction error of \ours is slightly higher than that of KNNImpute with $k=10$. Notably, when the masked trajectory length increases to $10$, \ours consistently outperforms KNNImpute across all datasets and metrics. This suggests that \ours is particularly effective at capturing longer-range dependencies and reconstructing coherent event trajectories from partial observations.

\paragraph{Forecasting Future Trajectories.}
Finally, we evaluated \ours on jointly forecasting a trajectory of future events. We ran DSTPP for comparison. Since DSTPP is restricted to auto-regressive prediction, each future event is  generated sequentially conditioned on the observed history and previously generated events. We also evaluated an autoregressive variant of our method, denoted by \ours-AR, which forecasts future events one at a time using the same autoregressive procedure. 
As shown in Table~\ref{tab:future-segment}, both \ours-AR and \ours consistently outperform DSTPP, often by a large margin. The only exception is temporal prediction on COVID-19, where the joint forecasting variant of \ours is slightly worse than DSTPP. However, when using the same autoregressive forecasting strategy, \ours-AR still outperforms DSTPP. Together, these results demonstrate the stronger long-horizon forecasting capability of our method. Overall, \ours achieves better or comparable accuracy relative to \ours-AR, showing that joint forecasting can generate multiple future events more efficiently while retaining accuracy comparable to the standard autoregressive forecasting strategy used by existing methods.

\cmt{
\begin{table}[t]
\centering
\small
\setlength{\tabcolsep}{6pt}
\renewcommand{\arraystretch}{1.05}
\caption{\small RMSE error of the conditional intensity on synthetic datasets. Best results are shown in bold.}
\label{tab:synthetic-rmse}
\begin{tabular}{lcc}
\toprule
Model & \textbf{STHP-SYN} & \textbf{STSC-SYN} \\
\midrule
AutoSTPP & $3.70 \pm 2.88$ & $0.397 \pm 0.216$\\
DeepSTPP & $3.89 \pm 2.82$ & $0.631 \pm 0.315$ \\
NSTPP    & $5.58 \pm 3.21$ & $0.578 \pm 0.318$ \\
ARCH     & $\mathbf{2.44 \pm 1.57}$ & $\mathbf{0.248 \pm 0.142}$ \\
\bottomrule
\end{tabular}
\end{table}

}

\vspace{-0.05in}
\section{Conclusion}
\vspace{-0.05in}
We presented \ours, a hierarchical flow-matching framework for spatiotemporal event modeling. By decomposing event generation into conditional temporal and spatial flows, \ours combines the flexibility of continuous generative modeling with tractable and accurate evaluation of conditional intensities and instantaneous event risks. Built on an encoder--decoder architecture with flexible masking strategies, \ours supports a broad range of inference and prediction tasks, including inverse inference, missing-event and missing-attribute completion, event-trajectory recovery, and future-trajectory forecasting. This substantially broadens the scope of spatiotemporal event modeling beyond the standard autoregressive next-event prediction setting targeted by most existing methods. Future work will investigate broader real-world applications and extend the framework toward scalable general-purpose models for spatiotemporal event data.

\bibliography{ref.bib}

@article{lipman2022flow,
  title={Flow matching for generative modeling},
  author={Lipman, Yaron and Chen, Ricky TQ and Ben-Hamu, Heli and Nickel, Maximilian and Le, Matt},
  journal={arXiv preprint arXiv:2210.02747},
  year={2022}
}

@inproceedings{yuan2023DSTPP,
author = {Yuan, Yuan and Ding, Jingtao and Shao, Chenyang and Jin, Depeng and Li, Yong},
title = {Spatio-temporal Diffusion Point Processes},
year = {2023},
isbn = {9798400701030},
publisher = {Association for Computing Machinery},
address = {New York, NY, USA},
url = {https://doi.org/10.1145/3580305.3599511},
doi = {10.1145/3580305.3599511},
booktitle = {Proceedings of the 29th ACM SIGKDD Conference on Knowledge Discovery and Data Mining},
pages = {3173–3184},
numpages = {12},
location = {Long Beach, CA, USA},
series = {KDD '23}
}

@article{sill1997monotonic,
  title={Monotonic networks},
  author={Sill, Joseph},
  journal={Advances in neural information processing systems},
  volume={10},
  year={1997}
}

@article{chen2018neural,
  title={Neural ordinary differential equations},
  author={Chen, Ricky TQ and Rubanova, Yulia and Bettencourt, Jesse and Duvenaud, David K},
  journal={Advances in neural information processing systems},
  volume={31},
  year={2018}
}

@inproceedings{chen2021neuralstpp,
title={Neural Spatio-Temporal Point Processes},
author={Ricky T. Q. Chen and Brandon Amos and Maximilian Nickel},
booktitle={International Conference on Learning Representations},
year={2021},
}

@InProceedings{pmlr-v168-zhou22a,
  title = 	 {Neural Point Process for Learning Spatiotemporal Event Dynamics},
  author =       {Zhou, Zihao and Yang, Xingyi and Rossi, Ryan and Zhao, Handong and Yu, Rose},
  booktitle = 	 {Proceedings of The 4th Annual Learning for Dynamics and Control Conference},
  pages = 	 {777--789},
  year = 	 {2022},
  editor = 	 {Firoozi, Roya and Mehr, Negar and Yel, Esen and Antonova, Rika and Bohg, Jeannette and Schwager, Mac and Kochenderfer, Mykel},
  volume = 	 {168},
  series = 	 {Proceedings of Machine Learning Research},
  month = 	 {23--24 Jun},
  publisher =    {PMLR},
  url = 	 {https://proceedings.mlr.press/v168/zhou22a.html}
}

@inproceedings{NEURIPS2023_9d30c2de,
 author = {Zhou, Zihao and Yu, Rose},
 booktitle = {Advances in Neural Information Processing Systems},
 editor = {A. Oh and T. Naumann and A. Globerson and K. Saenko and M. Hardt and S. Levine},
 pages = {50237--50253},
 publisher = {Curran Associates, Inc.},
 title = {Automatic Integration for Spatiotemporal Neural Point Processes},
 url = {https://proceedings.neurips.cc/paper_files/paper/2023/file/9d30c2def27b5c6a5fb21a9aa5c16f8f-Paper-Conference.pdf},
 volume = {36},
 year = {2023}
}

@article{ho2020denoising,
  title={Denoising diffusion probabilistic models},
  author={Ho, Jonathan and Jain, Ajay and Abbeel, Pieter},
  journal={Advances in neural information processing systems},
  volume={33},
  pages={6840--6851},
  year={2020}
}

@article{dormand1980family,
  title={A family of embedded {Runge--Kutta} formulae},
  author={Dormand, John R. and Prince, Peter J.},
  journal={Journal of Computational and Applied Mathematics},
  volume={6},
  number={1},
  pages={19--26},
  year={1980},
  publisher={Elsevier}
}

@article{scikit-learn,
  title={Scikit-learn: Machine Learning in {P}ython},
  author={Pedregosa, Fabian and Varoquaux, Ga{\"e}l and Gramfort, Alexandre and Michel, Vincent and Thirion, Bertrand and Grisel, Olivier and Blondel, Mathieu and Prettenhofer, Peter and Weiss, Ron and Dubourg, Vincent and Vanderplas, Jake and Passos, Alexandre and Cournapeau, David and Brucher, Matthieu and Perrot, Matthieu and Duchesnay, {\'E}douard},
  journal={Journal of Machine Learning Research},
  volume={12},
  pages={2825--2830},
  year={2011}
}

@inproceedings{du2016recurrent,
	title={Recurrent marked temporal point processes: Embedding event history to vector},
	author={Du, Nan and Dai, Hanjun and Trivedi, Rakshit and Upadhyay, Utkarsh and Gomez-Rodriguez, Manuel and Song, Le},
	booktitle={Proceedings of the 22nd ACM SIGKDD International Conference on Knowledge Discovery and Data Mining},
	pages={1555--1564},
	year={2016}
}

@inproceedings{lloyd2015variational,
	title={Variational inference for Gaussian process modulated Poisson processes},
	author={Lloyd, Chris and Gunter, Tom and Osborne, Michael and Roberts, Stephen},
	booktitle={International Conference on Machine Learning},
	pages={1814--1822},
	year={2015},
	organization={PMLR}
}

@inproceedings{zuo2020transformer,
	title={Transformer hawkes process},
	author={Zuo, Simiao and Jiang, Haoming and Li, Zichong and Zhao, Tuo and Zha, Hongyuan},
	booktitle={International Conference on Machine Learning},
	pages={11692--11702},
	year={2020},
	organization={PMLR}
}

@article{daley2003introduction,
	title={An introduction to the theory of point processes, volume 1: Elementary theory and methods},
	author={Daley, Daryl J and Vere-Jones, David},
	journal={Verlag New York Berlin Heidelberg: Springer},
	year={2003}
}

@inproceedings{zhang2020self,
	title={Self-attentive hawkes process},
	author={Zhang, Qiang and Lipani, Aldo and Kirnap, Omer and Yilmaz, Emine},
	booktitle={International Conference on Machine Learning},
	pages={11183--11193},
	year={2020},
	organization={PMLR}
}

@inproceedings{mei2017neural,
	title={The neural hawkes process: A neurally self-modulating multivariate point process},
	author={Mei, Hongyuan and Eisner, Jason M},
	booktitle={Advances in Neural Information Processing Systems},
	pages={6754--6764},
	year={2017}
}

@inproceedings{zhe2018stochastic,
	title={Stochastic Nonparametric Event-Tensor Decomposition},
	author={Zhe, Shandian and Du, Yishuai},
	booktitle={Advances in Neural Information Processing Systems},
	pages={6856--6866},
	year={2018}
}

@article{ogata1981lewis,
  title={On {L}ewis' simulation method for point processes},
  author={Ogata, Yosihiko},
  journal={IEEE transactions on information theory},
  volume={27},
  number={1},
  pages={23--31},
  year={1981},
  publisher={IEEE}
}

@article{hochreiter1997long,
	title={Long short-term memory},
	author={Hochreiter, Sepp and Schmidhuber, J{\"u}rgen},
	journal={Neural computation},
	volume={9},
	number={8},
	pages={1735--1780},
	year={1997},
	publisher={MIT Press}
}

@article{jia2019neural,
  title={Neural jump stochastic differential equations},
  author={Jia, Junteng and Benson, Austin R},
  journal={Advances in Neural Information Processing Systems},
  volume={32},
  year={2019}
}

@inproceedings{blundell2012modelling,
  title={Modelling reciprocating relationships with Hawkes processes},
  author={Blundell, Charles and Beck, Jeff and Heller, Katherine A},
  booktitle={Advances in Neural Information Processing Systems},
  pages={2600--2608},
  year={2012}
}

@article{hawkes1971spectra,
  title={Spectra of some self-exciting and mutually exciting point processes},
  author={Hawkes, Alan G},
  journal={Biometrika},
  volume={58},
  number={1},
  pages={83--90},
  year={1971},
  publisher={Oxford University Press}
}

@inproceedings{wang2017predicting,
  title={Predicting user activity level in point processes with mass transport equation},
  author={Wang, Yichen and Ye, Xiaojing and Zha, Hongyuan and Song, Le},
  booktitle={Advances in Neural Information Processing Systems},
  pages={1644--1654},
  year={2017}
}

@article{cox1972regression,
  title={Regression models and life-tables},
  author={Cox, David R},
  journal={Journal of the royal statistical society: Series B (methodological)},
  volume={34},
  number={2},
  pages={187--202},
  year={1972},
  publisher={Wiley Online Library}
}

@article{troyanskaya2001missing,
  title={Missing value estimation methods for {DNA} microarrays},
  author={Troyanskaya, Olga and Cantor, Michael and Sherlock, Gavin and Brown, Pat and Hastie, Trevor and Tibshirani, Robert and Botstein, David and Altman, Russ B},
  journal={Bioinformatics},
  volume={17},
  number={6},
  pages={520--525},
  year={2001},
  publisher={Oxford University Press}
}

@inproceedings{10.1145/3580305.3599511,
author = {Yuan, Yuan and Ding, Jingtao and Shao, Chenyang and Jin, Depeng and Li, Yong},
title = {Spatio-temporal Diffusion Point Processes},
year = {2023},
isbn = {9798400701030},
publisher = {Association for Computing Machinery},
address = {New York, NY, USA},
url = {https://doi.org/10.1145/3580305.3599511},
doi = {10.1145/3580305.3599511},
booktitle = {Proceedings of the 29th ACM SIGKDD Conference on Knowledge Discovery and Data Mining},
pages = {3173–3184},
numpages = {12},
location = {Long Beach, CA, USA},
series = {KDD '23}
}

@article{braga2001effects,
  title={The effects of hot spots policing on crime},
  author={Braga, Anthony A},
  journal={The ANNALS of the American Academy of Political and Social Science},
  volume={578},
  number={1},
  pages={104--125},
  year={2001},
  publisher={Sage Publications Sage CA: Thousand Oaks, CA}
}

@article{
lin2022exploring,
title={Exploring Generative Neural Temporal Point Process},
author={Haitao Lin and Lirong Wu and Guojiang Zhao and Liu Pai and Stan Z. Li},
journal={Transactions on Machine Learning Research},
issn={2835-8856},
year={2022},
url={https://openreview.net/forum?id=NPfS5N3jbL},
note={}
}

@article{ludke2023add,
  title={Add and thin: Diffusion for temporal point processes},
  author={L{\"u}dke, David and Bilo{\v{s}}, Marin and Shchur, Oleksandr and Lienen, Marten and G{\"u}nnemann, Stephan},
  journal={Advances in Neural Information Processing Systems},
  volume={36},
  pages={56784--56801},
  year={2023}
}

@inproceedings{
ludke2026editbased,
title={Edit-Based Flow Matching for Temporal Point Processes},
author={David L{\"u}dke and Marten Lienen and Marcel Kollovieh and Stephan G{\"u}nnemann},
booktitle={The Fourteenth International Conference on Learning Representations},
year={2026},
url={https://openreview.net/forum?id=FNf9IV1P2L}
}

@inproceedings{
kerrigan2026eventflow,
title={Event{F}low: Forecasting Temporal Point Processes with Flow Matching},
author={Gavin Kerrigan and Kai Nelson and Padhraic Smyth},
booktitle={The 29th International Conference on Artificial Intelligence and Statistics},
year={2026},
url={https://openreview.net/forum?id=QXqKGOE2JW}
}

@book{coddington1955theory,
  title={Theory of Ordinary Differential Equations},
  author={Coddington, Earl A. and Levinson, Norman},
  publisher={McGraw-Hill},
  year={1955}
}

@article{
    shchur2020intensity,
    title={Intensity-Free Learning of Temporal Point Processes},
    author={Oleksandr Shchur and Marin Bilo\v{s} and Stephan G\"{u}nnemann},
    journal={International Conference on Learning Representations (ICLR)},
    year={2020},
}

@inproceedings{wang2022nonparametric,
  title={Nonparametric embeddings of sparse high-order interaction events},
  author={Wang, Zheng and Xu, Yiming and Tillinghast, Conor and Li, Shibo and Narayan, Akil and Zhe, Shandian},
  booktitle={International Conference on Machine Learning},
  pages={23237--23253},
  year={2022},
  organization={PMLR}
}

@article{loshchilov2017decoupled,
  title={Decoupled weight decay regularization},
  author={Loshchilov, Ilya and Hutter, Frank},
  journal={arXiv preprint arXiv:1711.05101},
  year={2017}
}

@article{yuan2019multivariate,
  title={Multivariate spatiotemporal {H}awkes processes and network reconstruction},
  author={Yuan, Baichuan and Li, Hao and Bertozzi, Andrea L and Brantingham, P Jeffrey and Porter, Mason A},
  journal={SIAM Journal on Mathematics of Data Science},
  volume={1},
  number={2},
  pages={356--382},
  year={2019},
  publisher={SIAM}
}

@article{reinhart2018review,
  title={A review of self-exciting spatio-temporal point processes and their applications},
  author={Reinhart, Alex},
  journal={Statistical Science},
  volume={33},
  number={3},
  pages={299--318},
  year={2018},
  publisher={JSTOR}
}
\bibliographystyle{apalike}








\newpage
\appendix
\onecolumn
\section*{Appendix}
\section{Proof of Lemma~\ref{lem}}\label{sect:appendix:proof}

\begin{proof}
Let \(\Phi_t:\mathbb{R}\to\mathbb{R}\) denote the flow map induced by the ODE, so that for each initial value \(s_0\in\mathbb{R}\), the corresponding solution at time \(t\) is
\[
\Phi_t(s_0).
\]
Thus, if \(S(0)\) is the random initial state, then
\[
S(t)=\Phi_t(S(0)),
\]
and for the deterministic trajectory \(\{s(t)\}_{t\ge 0}\) with initial value \(s(0)\), we have
\[
s(t)=\Phi_t(s(0)).
\]

Under the standard continuity and Lipschitz assumptions of the Picard--Lindel\"of theorem, the ODE admits a unique solution for each initial condition~\citep{coddington1955theory}.
In one dimension, uniqueness implies that the flow map \(\Phi_t\) is order-preserving: if \(s_0^{(1)} < s_0^{(2)}\), then
\[
\Phi_t\bigl(s_0^{(1)}\bigr) < \Phi_t\bigl(s_0^{(2)}\bigr), \qquad \forall t \ge 0,
\]
since otherwise the two trajectories would intersect at some time, contradicting uniqueness.

Because \(\Phi_t\) is strictly increasing, we have
\[
S(t)\le s(t)
\quad\Longleftrightarrow\quad
\Phi_t(S(0)) \le \Phi_t(s(0))
\quad\Longleftrightarrow\quad
S(0)\le s(0).
\]
Taking probabilities on both sides yields
\[
F_t(s(t))
=
\mathbb{P}\bigl(S(t)\le s(t)\bigr)
=
\mathbb{P}\bigl(S(0)\le s(0)\bigr)
=
F_0(s(0)).
\]
Therefore, the CDF value remains constant along every solution trajectory.
\end{proof}

\section{Dataset Details}\label{sect:dataset}
\paragraph{Earthquake}\citep{chen2021neuralstpp}.
This dataset records earthquake and aftershock occurrences in Japan between Year 1990 and 2020. Each sequence aggregates one month of events, so the prediction horizon is $T=30$ days. The processed benchmark contains $1{,}050$ sequences in total, split into $950$ training, $50$ validation, and $50$ test sequences. According to the provided data files, sequence lengths span from $22$ to $554$ events, with an average of $87.5$ events per sequence.

\paragraph{COVID-19.}~\citep{chen2021neuralstpp}
The  dataset stores daily COVID-19 case events across counties in New Jersey from March 2020 to July 2020. One sequence corresponds to a one-week window, i.e., $T=7$ days. The full dataset contains $1{,}650$ sequences, partitioned into $1{,}450$ for training, $100$ for validation, and $100$ for testing. Sequence lengths vary from $5$ to $287$, and the average length is $97.8$.

\paragraph{Citibike.}~\citep{yuan2023DSTPP}
The {Citibike} benchmark is built from New York City bike-sharing trips collected from April to August 2019. The temporal unit is one hour, and each sequence covers a single day, giving $T=24$. We use $2{,}440$ training sequences, $300$ validation sequences, and $320$ test sequences. The processed files contain $3{,}060$ sequences in total, with lengths ranging from $14$ to $204$ and an average of $134.9$ events.

\begin{figure*}[t]
    \centering
    \includegraphics[width=\textwidth]{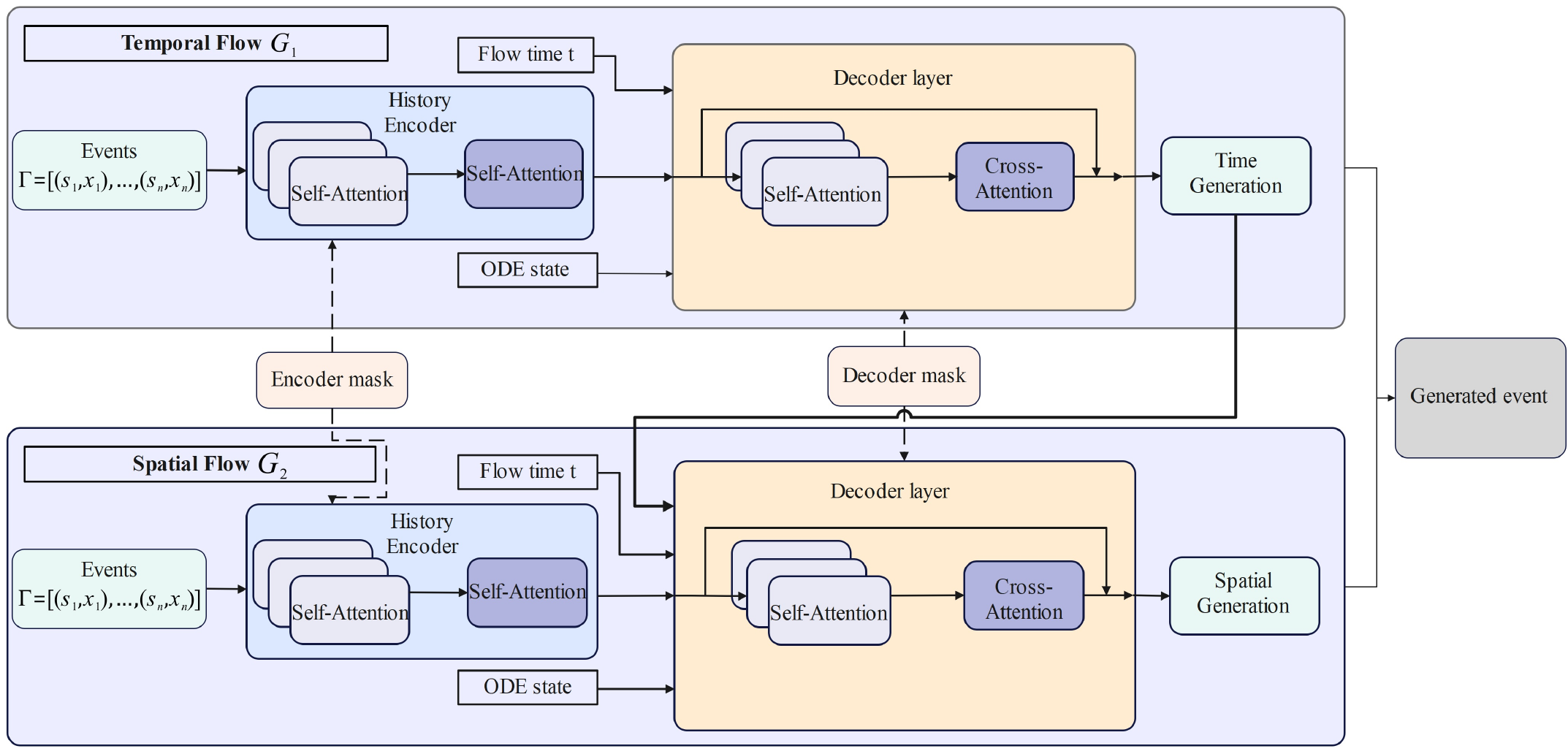}
    \caption{\small Overview of ARCH framework for spatiotemporal event modeling.}
    \label{fig:arch-architecture}
\end{figure*}

\begin{figure*}[t]
    \centering
    \includegraphics[width=\textwidth]{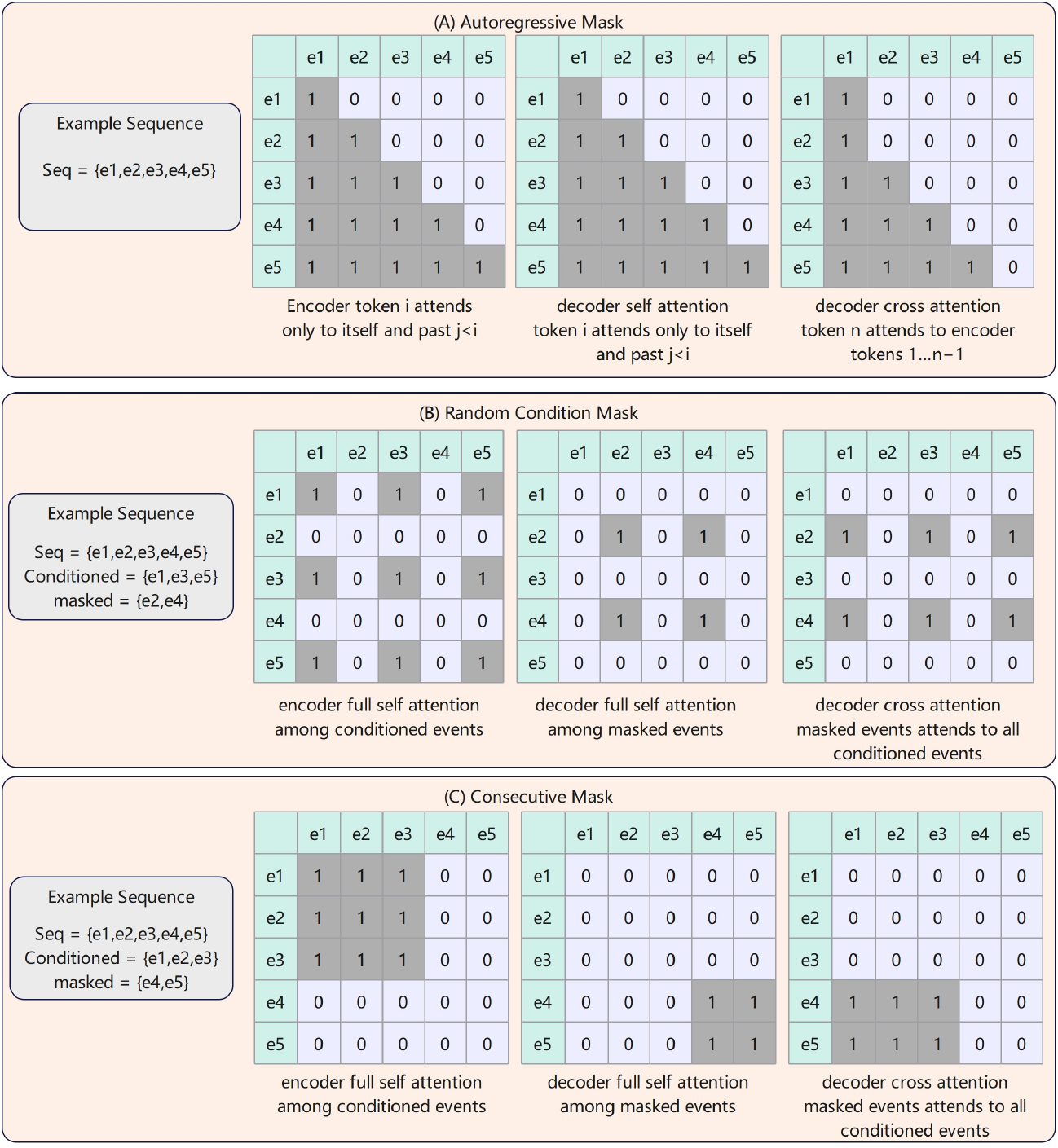}
    \caption{Illustration of the masking strategies used in ARCH training: autoregressive mask, random condition mask, and consecutive mask.}
    \label{fig:mask-strategy}
\end{figure*}



\section{Training Details and Hyperparameter Settings}\label{sect:appendix:hyper}

For all real-world datasets, we apply a linear shift and scaling to normalize the spatial domain to $[0,1]^2$ during training. At evaluation time, both the generated event times and spatial locations are mapped back to their original domains before computing the prediction errors.

In our implementation, the temporal and spatial flows share the same Transformer backbone for the encoder and decoder, but use separate MLP heads for velocity prediction. The inter-event time $\Delta t_i$ is passed through a sinusoidal encoding layer to obtain its latent representation. The flow time $t \in [0,1]$ is first encoded by a sinusoidal encoding layer and then transformed by an MLP. The ODE states and event spatial locations are separately embedded using MLP layers. The relevant representations are then summed to form token embeddings, which are processed by the self-attention layers in the encoder and decoder.

When summing the loss terms, all temporal velocity losses are assigned weight $1$, while all spatial velocity losses are assigned weight $1.5$. The three mask types are treated equally, i.e., no additional weighting is applied across mask types. For the random-conditioning mask, each binary mask indicator is sampled independently, with probability $0.7$ of being one. To sample a consecutive mask, we uniformly sample two endpoints $a$ and $b$ and resample until $a<b$. The architectural hyperparameters of \ours are listed in Table~\ref{tab:hyper-setting}.

All experiments were conducted on a Linux workstation equipped with a NVIDIA GeForce RTX 3090 GPU card (24 GB Memory).

\begin{table}[t]
\centering
\small
\setlength{\tabcolsep}{6pt}
\renewcommand{\arraystretch}{1.05}
\caption{\small Hyperparameter settings of \ours.}
\label{tab:hyper-setting}
\begin{tabular}{cc}
\toprule
Hyperparameter & Setting \\
\midrule
Embedding Dimension in Encoder & 32 \\
Embedding Dimension in Decoder & 32 \\
Sinusoidal Encoding dimension & 32\\
Input Layer for ODE state of the temporal flow  & MLP (64, 32)/GELU \\
Input Layer for ODE state of the spatial flow & MLP (64, 32)/ GELU \\
Input Layer for event spatial locations & MLP (64, 32)/GELU\\
Input Layer for flow time $t$ &  Sinusoidal Encoding $\rightarrow$ MLP [32, 32]/GELU\\
Input Layer for $\Delta s$ & Log Transform $\rightarrow$ Sinusoidal Encoding\\
Number of Encoder layers & 3 \\
Number of Decoder layers & 3 \\
Layer Norm First & True\\
Dropout & 0.15\\
Feed Forward Dimension in Encoder & 32 \\
Feed Forward Dimension in Decoder & 32\\
Number of Attention Heads & 1\\
Temporal Velocity Head & MLP (256, 1)/GELU \\
Spatial Velocity Head & MLP (256, $d$)/GELU \\
Optimizer & AdamW \\
Learning rate & $10^{-3}$ \\
\bottomrule
\end{tabular}
\end{table}

\begin{table*}[th]
\centering
\small
\caption{\small Missing event recovery.}
\label{tb:completion}
\small
\begin{tabular}{c|cc|cc|cc}
\hline
& \multicolumn{2}{c|}{Earthquake} & \multicolumn{2}{c|}{COVID-19} & \multicolumn{2}{c}{Citibike} \\ \cline{2-7}
Model & Spatial $\downarrow$ & Temporal $\downarrow$ & Spatial $\downarrow$ & Temporal $\downarrow$ & Spatial $\downarrow$ & Temporal $\downarrow$ \\ \hline

\multicolumn{7}{c}{Missing ratio = 10\%} \\ \hline
KNNImp ($k=3$)  & 7.28{\scriptsize $\pm$0.096} & 0.414{\scriptsize $\pm$0.012} & 0.462{\scriptsize $\pm$0.003} & 0.096{\scriptsize $\pm$0.003} & 0.037{\scriptsize $\pm$0.000} & 0.228{\scriptsize $\pm$0.006} \\
KNNImp ($k=5$)  & 7.04{\scriptsize $\pm$0.075} & 0.395{\scriptsize $\pm$0.012} & 0.441{\scriptsize $\pm$0.003} & 0.092{\scriptsize $\pm$0.004} & 0.035{\scriptsize $\pm$0.000} & 0.221{\scriptsize $\pm$0.006} \\
KNNImp ($k=10$) & 6.86{\scriptsize $\pm$0.073} & 0.381{\scriptsize $\pm$0.013} & 0.424{\scriptsize $\pm$0.002} & \textbf{0.090{\scriptsize $\pm$0.004}} & 0.034{\scriptsize $\pm$0.000} & 0.216{\scriptsize $\pm$0.006} \\
\ours  & \textbf{6.71{\scriptsize $\pm$0.022}} & \textbf{0.365{\scriptsize $\pm$0.011}} & \textbf{0.379{\scriptsize $\pm$0.003}} & 0.094{\scriptsize $\pm$0.005} & \textbf{0.031{\scriptsize $\pm$0.000}} & \textbf{0.204{\scriptsize $\pm$0.005}} \\ \hline

\multicolumn{7}{c}{Missing ratio = 20\%} \\ \hline
KNNImp ($k=3$)  & 7.20{\scriptsize $\pm$0.072} & 0.418{\scriptsize $\pm$0.007} & 0.469{\scriptsize $\pm$0.003} & 0.102{\scriptsize $\pm$0.002} & 0.037{\scriptsize $\pm$0.000} & 0.226{\scriptsize $\pm$0.004} \\
KNNImp ($k=5$)  & 6.95{\scriptsize $\pm$0.050} & 0.402{\scriptsize $\pm$0.007} & 0.447{\scriptsize $\pm$0.003} & 0.099{\scriptsize $\pm$0.002} & 0.035{\scriptsize $\pm$0.000} & 0.219{\scriptsize $\pm$0.004} \\
KNNImp ($k=10$) & 6.77{\scriptsize $\pm$0.049} & 0.388{\scriptsize $\pm$0.007} & 0.429{\scriptsize $\pm$0.003} & \textbf{0.097{\scriptsize $\pm$0.002}} & 0.034{\scriptsize $\pm$0.000} & 0.215{\scriptsize $\pm$0.004} \\
\ours  & \textbf{6.57{\scriptsize $\pm$0.050}} & \textbf{0.372{\scriptsize $\pm$0.007}} & \textbf{0.386{\scriptsize $\pm$0.003}} & 0.103{\scriptsize $\pm$0.002} & \textbf{0.031{\scriptsize $\pm$0.000}} & \textbf{0.198{\scriptsize $\pm$0.003}} \\ \hline

\multicolumn{7}{c}{Missing ratio = 30\%} \\ \hline
KNNImp ($k=3$)  & 7.20{\scriptsize $\pm$0.028} & 0.416{\scriptsize $\pm$0.003} & 0.469{\scriptsize $\pm$0.002} & 0.101{\scriptsize $\pm$0.001} & 0.037{\scriptsize $\pm$0.000} & 0.228{\scriptsize $\pm$0.002} \\
KNNImp ($k=5$)  & 6.94{\scriptsize $\pm$0.026} & 0.398{\scriptsize $\pm$0.003} & 0.447{\scriptsize $\pm$0.002} & 0.098{\scriptsize $\pm$0.001} & 0.036{\scriptsize $\pm$0.000} & 0.220{\scriptsize $\pm$0.002} \\
KNNImp ($k=10$) & 6.76{\scriptsize $\pm$0.022} & 0.384{\scriptsize $\pm$0.004} & 0.430{\scriptsize $\pm$0.001} & \textbf{0.095{\scriptsize $\pm$0.001}} & 0.034{\scriptsize $\pm$0.000} & 0.215{\scriptsize $\pm$0.002} \\
\ours & \textbf{6.53{\scriptsize $\pm$0.046}} & \textbf{0.366{\scriptsize $\pm$0.006}} & \textbf{0.385{\scriptsize $\pm$0.001}} & 0.099{\scriptsize $\pm$0.001} & \textbf{0.031{\scriptsize $\pm$0.000}} & \textbf{0.196{\scriptsize $\pm$0.002}} \\ \hline
\end{tabular}
\end{table*}

\section{Additional Results for Missing Event Recovery}\label{sect:missing}
We randomly selected a portion of events. Each selected event has one third chance to miss its time, one third chance to miss its location, and one third chance to be completely missing. We then applied \ours to predict the missing part of these events. We considered missing ratios of 5\% and 10\%. We compared with KNNImpute where $k=10$.  As shown in Table~\ref{tab:partial-missing}, \ours achieves better recover accuracy in all the cases except that on COVID-19 dataset, the event time accuracy of \ours is slightly worse than KNNImpute under missing ratio 10\%.

\begin{table*}[th]
\centering
\small
\caption{\small Missing event attribute (time and/or location) recovery.}
\label{tab:partial-missing}
\small
\begin{tabular}{c|cc|cc|cc}
\hline
& \multicolumn{2}{c|}{Earthquake} & \multicolumn{2}{c|}{COVID-19} & \multicolumn{2}{c}{Citibike} \\ \cline{2-7}
Model & Spatial $\downarrow$ & Temporal $\downarrow$ & Spatial $\downarrow$ & Temporal $\downarrow$ & Spatial $\downarrow$ & Temporal $\downarrow$ \\ \hline

\multicolumn{7}{c}{Missing ratio  = 5\%} \\ \hline
KNNImp ($k=10$) & 6.70{\scriptsize $\pm$0.094} & 0.374{\scriptsize $\pm$0.035} & 0.420{\scriptsize $\pm$0.003} & \textbf{0.090{\scriptsize $\pm$0.010}} & 0.034{\scriptsize $\pm$0.000} & 0.221{\scriptsize $\pm$0.009} \\ 
\ours (Ours) & \textbf{6.45{\scriptsize $\pm$0.163}} & \textbf{0.348{\scriptsize $\pm$0.030}} & \textbf{0.383{\scriptsize $\pm$0.007}} & \textbf{0.090{\scriptsize $\pm$0.011}} & \textbf{0.031{\scriptsize $\pm$0.000}} & \textbf{0.214{\scriptsize $\pm$0.009}} \\ \hline

\multicolumn{7}{c}{Missing ratio = 10\%} \\ \hline
KNNImp ($k=10$) & 6.51{\scriptsize $\pm$0.051} & 0.389{\scriptsize $\pm$0.014} & 0.431{\scriptsize $\pm$0.004} & \textbf{0.089{\scriptsize $\pm$0.007}} & 0.034{\scriptsize $\pm$0.000} & 0.220{\scriptsize $\pm$0.005} \\
\ours (Ours) & \textbf{6.26{\scriptsize $\pm$0.140}} & \textbf{0.367{\scriptsize $\pm$0.017}} & \textbf{0.397{\scriptsize $\pm$0.003}} & 0.093{\scriptsize $\pm$0.008} & \textbf{0.032{\scriptsize $\pm$0.000}} & \textbf{0.211{\scriptsize $\pm$0.006}} \\ \hline
\end{tabular}
\end{table*}

\begin{figure*}[t]
    \centering
    \includegraphics[width=\textwidth]{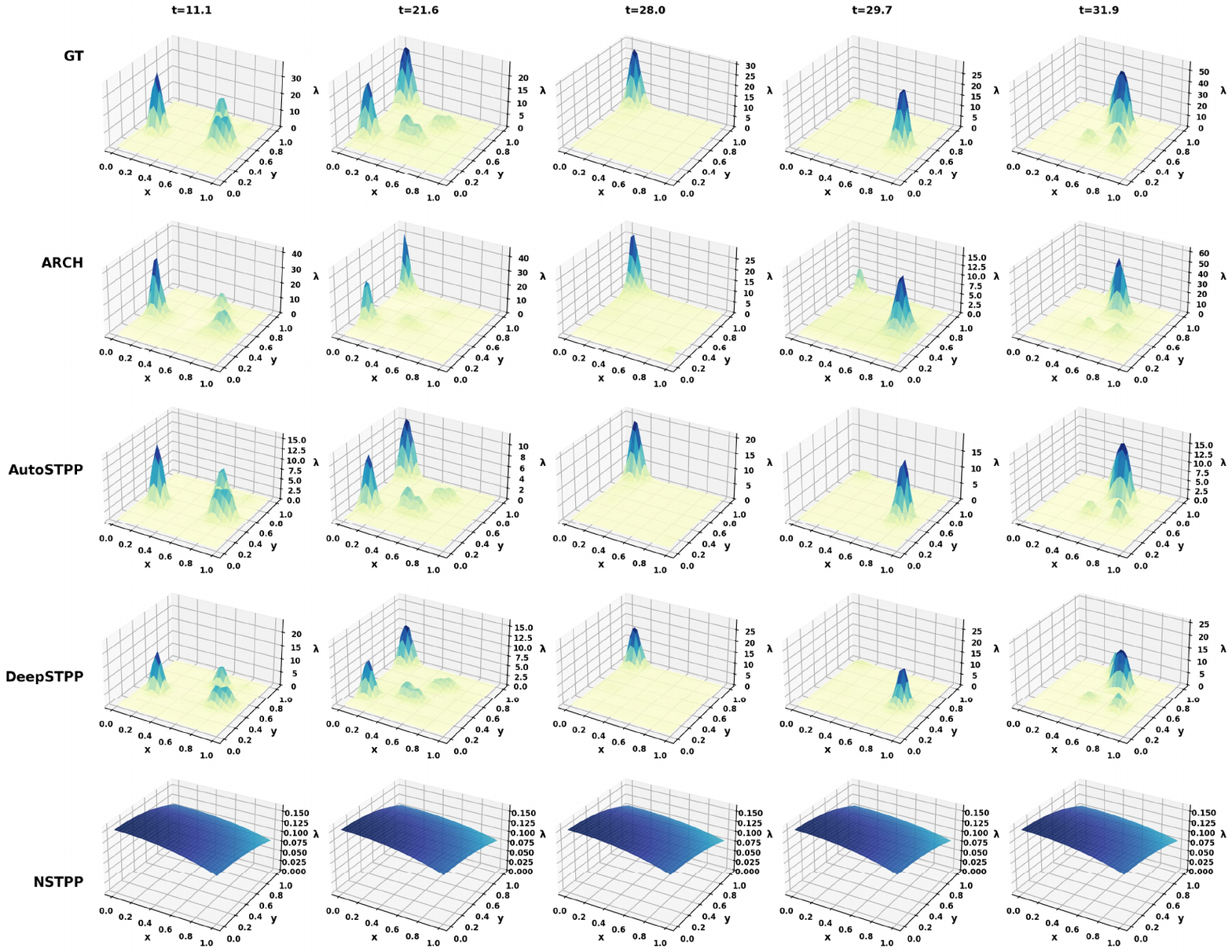}
\caption{\small Conditional intensity functions, i.e., instantaneous event risks, at representative time points of a test sequence simulated by a spatiotemporal Hawkes process (\textbf{SYN1}). GT denotes the ground truth.}
    \label{fig:cif-HP}
\end{figure*}

\begin{algorithm}[t]
\caption{\ours Training}
\label{alg:arch-training}
\begin{algorithmic}[1]
\Require Training dataset $\mathcal{D}$, temporal flow $\v_{\Theta_1}$, spatial flow $\v_{\Theta_2}$
\State Apply a log transformation to each inter-event time: 
$\Delta s_i \mapsto \log(\Delta s_i+\epsilon)$, where $\Delta s_i=s_i-s_{i-1}$ and $\epsilon>0$ is a small constant.
\While{not converged}
    \State Sample a mini-batch $\mathcal{B}$ from $\mathcal{D}$ and pad all sequences to the same length.
    \State For each sequence in $\mathcal{B}$, construct the autoregressive mask $\Mcal_{\mathrm{ar}}$, and sample a random-conditioning mask $\Mcal_{\mathrm{rc}}$ and a consecutive mask $\Mcal_{\mathrm{con}}$.
    \For{each mask type $\Mcal \in \{\Mcal_{\mathrm{ar}}, \Mcal_{\mathrm{rc}}, \Mcal_{\mathrm{con}}\}$}
        \State Combine $\Mcal$ with the padding mask to derive the encoder mask $\Mcal^{\mathrm{enc}}$ and decoder mask $\Mcal^{\mathrm{dec}}$ for $\mathcal{B}$.
        \State Sample initial noise $\Scal_0$ and $\Xcal_0$ from standard normal distributions for $\v_{\Theta_1}$ and $\v_{\Theta_2}$, respectively.
        \State Sample flow times $\tau_1,\tau_2 \sim \mathrm{Uniform}(0,1)$ for the temporal and spatial flows, respectively.
        \State Construct the interpolated flow states:
        \begin{align}
            \Scal_{\tau_1} = (1-\tau_1)\Scal_0 + \tau_1\Scal_1, 
            \qquad
            \Xcal_{\tau_2} = (1-\tau_2)\Xcal_0 + \tau_2\Xcal_1. 
            \notag
        \end{align}
        \State Apply $(\Mcal^{\mathrm{enc}}, \Mcal^{\mathrm{dec}})$ to $\v_{\Theta_1}$ and $\v_{\Theta_2}$ to obtain velocity predictions $\Vcal_1$ and $\Vcal_2$.
        \State Compute the masked flow-matching losses:
        \begin{align}
            \Lcal_1^{\Mcal} 
            &= 
            \left\|
            \Mcal^{\mathrm{dec}} \circ 
            \left(\Vcal_1 - (\Scal_1-\Scal_0)\right)
            \right\|^2,
            \qquad
            \Lcal_2^{\Mcal} 
            = 
            \left\|
            \Mcal^{\mathrm{dec}} \circ 
            \left(\Vcal_2 - (\Xcal_1-\Xcal_0)\right)
            \right\|^2.
            \notag
        \end{align}
    \EndFor
    \State Sum all loss terms, backpropagate the gradients, and update the model parameters.
\EndWhile
\end{algorithmic}
\end{algorithm}

\begin{algorithm}[t]
\caption{\ours Generation}
\label{alg:arch-generation}
\begin{algorithmic}[1]
\Require Observed events, target-event positions, trained temporal flow $\v_{\Theta_1}$ and spatial flow $\v_{\Theta_2}$
\State Construct the encoder mask $\Mcal^{\mathrm{enc}}$ and decoder mask $\Mcal^{\mathrm{dec}}$ according to the observed and target events.
\State Sample initial noise $\Scal_0$ from a standard normal distribution.
\State Starting from $\Scal_0$, integrate the temporal ODE from flow time $0$ to $1$; at each integration step, apply $(\Mcal^{\mathrm{enc}},\Mcal^{\mathrm{dec}})$ to $\v_{\Theta_1}$ to obtain the velocity field.
\State Obtain $\Scal_1$ and map it back to the inter-event-time domain by applying the inverse log transformation, $\Delta s = \exp(\Scal_1)-\epsilon$.
\State Sample initial noise $\Xcal_0$ from a standard normal distribution.
\State Condition the spatial flow $\v_{\Theta_2}$ on $\Scal_1$. Starting from $\Xcal_0$, integrate the spatial ODE from flow time $0$ to $1$; at each integration step, apply $(\Mcal^{\mathrm{enc}},\Mcal^{\mathrm{dec}})$ to $\v_{\Theta_2}$ to obtain the velocity field.
\State Obtain $\Xcal_1$ as the generated spatial locations of the target events.
\end{algorithmic}
\end{algorithm}

\section{Limitations and Future Work}

While our framework supports a broad range of inference tasks beyond autoregressive prediction, it currently assumes that the number of target events, or target event attributes, is specified in advance. It does not explicitly infer how many events are missing within an observed time window, or how many future events should occur within a forecasting horizon. Although event counts can in principle be estimated through repeated sampling and rejection, a more principled and efficient approach would be valuable. We leave this direction for future work.

In addition, our results demonstrate that flow matching provides a flexible framework for capturing diverse event distributions and their instantaneous risks. However, in many applications, structural priors may be important for uncovering interpretable or semantically meaningful event relationships. Incorporating such priors into the proposed framework is another promising direction for future work. Finally, we plan to extend our approach to support discrete event-mask generation, enabling the model to infer not only the values of target events but also which types of events or attributes should be generated.
\newpage
\appendix
\onecolumn



\end{document}